\newif\ifreport\reporttrue
  \providecommand\BibTeX{{%
    \normalfont B\kern-0.5em{\scshape i\kern-0.25em b}\kern-0.8em\TeX}}}
\begin{document}
 
 \setcopyright{acmcopyright}
\copyrightyear{2022}
\acmYear{2022}
\acmDOI{XXXXXXX.XXXXXXX}

\acmConference[ACM MobiHoc '22]{The Twenty-third ACM International Symposium on Theory, Algorithmic Foundations, and Protocol Design for Mobile Networks and Mobile Computing}{17-22, Oct. 2022}{Seoul, South Korea}
\acmPrice{15.00}
\acmISBN{978-1-4503-XXXX-X/18/06}

 \title{Online Learning of Whittle Indices for Restless Bandits with Non-Stationary Transition Kernels}

\author{Md Kamran Chowdhury Shisher, Vishrant Tripathi, Mung Chiang, Christopher G. Brinton}
\email{Email: {mshisher,tripathv, chiang, cgb} @purdue.edu}
\affiliation{%
  \institution{Department of ECE, Purdue University}
  \city{West Lafayette}
  \state{Indiana}
  \country{USA} 
}

\settopmatter{printacmref=false}
\newcommand{\ignore}[1]{{}}
\renewcommand\footnotetextcopyrightpermission[1]{}
\pagestyle{plain}

\begin{abstract}
The restless multi-armed bandit (RMAB) framework is a popular approach to solving resource allocation problems in networked systems. 
In this paper, we study optimal resource allocation in RMABs facing unknown and non-stationary dynamics. Solving RMABs optimally is known to be PSPACE-hard even with full knowledge of model parameters. While Whittle index policies offer asymptotic optimality with low computational cost, they require access to stationary transition kernels, an unrealistic assumption in many modern networking applications. To address this challenge, we propose a Sliding-Window Online Whittle (SW-Whittle) policy that remains computationally efficient while adapting to time-varying kernels. Through theoretical analysis, we show that our algorithm achieves sub-linear dynamic regret with respect to the number of episodes. We further address the important case where the variation budget is unknown in advance by combining a Bandit-over-Bandit framework with our sliding-window design. In our scheme, window lengths are tuned online as a function of the estimated variation, while Whittle indices are computed via an upper-confidence-bound of the estimated transition kernels and a bilinear optimization routine. Numerical experiments demonstrate that our algorithm consistently outperforms baselines, achieving the lowest cumulative regret across a range of non-stationary environments.
\end{abstract}

\keywords{Online Learning, Whittle Index, Resource Allocation}

\maketitle

\section{Introduction} 
Restless multi-armed bandits (RMABs) have received significant attention in solving many sequential decision-making
problems \cite{whittle1988restless, borkar2017opportunistic, Kadota2018, Tripathi2019, shishertimely, Kadota2019, liu2011indexability, ruiz2020multi, dahiya2022scalable, dusonchet2003continuous, meshram2017hidden, meshram2018whittle, villar2015multi, bhattacharya2018restless, lee2019optimal, mate2020collapsing, behari2024decision}. In the RMAB framework, a decision-maker must select a subset of arms to activate at each time step, subject to a global resource constraint, while the other arms remain passive. Each arm is modeled as a Markov decision process, and evolves stochastically according to two different transition kernels, depending on whether the arm is activated or not: the activated arms evolve according to their active Markov transition kernels, while the rest of the arms evolve according to their passive Markov transition kernels. At the end of the time-slot, the decision maker receives rewards from each arm, where rewards are functions of the current state and action.

RMABs have a long history in resource allocation and operations research literature, starting with Whittle's seminal work \cite{whittle1988restless}. Whittle introduced a heuristic policy for the RMAB problem, known as the Whittle index policy. This policy relies on establishing a special mathematical property called \textit{indexability} for each arm and then deriving index functions that map states to the value of activating an arm in that state. At each decision time, the policy activates the $M$ bandits with the highest Whittle indices out of the $N$ bandits. 
The Whittle index achieves asymptotic optimality, provided the RMAB is indexable and has a global attractor point \cite{weber1990index,verloop2016asymptotically, gast:hal-03041176, gast2021lp}.

Over the past four decades, RMABs have been used to model and optimize resource allocation problems in many domains such as wireless scheduling \cite{borkar2017opportunistic, Kadota2018, Tripathi2019, shishertimely, Kadota2019}, machine monitoring and control \cite{liu2011indexability, ruiz2020multi, dahiya2022scalable}, server scheduling \cite{dusonchet2003continuous}, recommendation systems \cite{meshram2017hidden, meshram2018whittle}, and healthcare \cite{villar2015multi, bhattacharya2018restless, lee2019optimal, mate2020collapsing, behari2024decision}. However, most of these prior works utilizing Whittle index-based policies focus on known and stationary transition kernels \cite{NIPS2015_6d70cb65, Tripathi2019, shishertimely, le2008multi, meshram2018whittle}. In each of these aforementioned applications, transition kernels can be unknown and non-stationary, i.e., the laws governing the evolution of states can drift over time. For example:
\begin{itemize}[leftmargin=5mm]
    \item Consider a wireless scheduling problem where a base station must allocate limited frequency channels to a set of mobile users. The achievable data rate or the successful transmission between a user and the base station depends on the channel state. This channel state evolution is typically random and time-varying due to user mobility and shadowing. Thus, the wireless scheduling problem can be formulated as an RMAB, but with unknown and non-stationary transition kernels.
    
    \item Consider a load balancing problem where jobs arrive into a datacenter and a decision-maker assigns jobs to servers. The time required to finish a job at any server depends on its current load and how it evolves over time. This evolution is random and time-varying since there are multiple load balancers and job streams contributing to the load at any given server within a large datacenter. Deciding which server to pick can then be formulated as a non-stationary RMAB.
\end{itemize}


In such settings where the transition kernels of a RMAB are unknown and non-stationary, Whittle indexing becomes an online/reinforcement learning problem. Many papers have designed algorithms for Markov decision processes (MDPs) and MABs with unknown and non-stationary transition kernels and have analyzed dynamic regret \cite{ortner2020variational, cheung2020reinforcement, marin2024metacurl, wei2023provably, wei2021non}. However, these works can not be applied to RMAB due its special structure: \textit{the passive arms (arms that are not activated) continue to evolve stochastically}. Due to this and the combinatorial action space, even when the transition kernels are known, developing an optimal policy for RMABs is well known to be PSPACE-hard \cite{papadimitriou1994complexity}.

On the other hand, applying traditional online learning and reinforcement learning policies \cite{ortner2020variational, cheung2020reinforcement, marin2024metacurl, wei2023provably, wei2021non} naively to these RMABs may lead to inefficient learning performance and to exponential regret bounds. This motivates combining Whittle with online learning methods. In this direction, a recent work \cite{wang2023optimistic} designed a Whittle index-based policy called {\it UCWhittle} for unknown but stationary transition kernels. Although techniques exist for adapting reinforcement learning algorithms to non-stationary environments \cite{wei2021non}, they are not directly compatible with the recently proposed UCWhittle policy \cite{wang2023optimistic}. This incompatibility arises from the unique structure of RMABs and the specific method used to compute the Whittle index via Lagrangian relaxation. 

Motivated by this, 
we first pose the following research question:
\begin{quote}
\textbf{\textit{Can we develop a provably efficient Whittle index-based online algorithm for RMABs facing non-stationary transition kernels?}}  
\end{quote}

Furthermore, while the transition kernels themselves are often unknown, in many real-world applications, prior knowledge regarding the \textit{sparsity} of these kernels may be available. For example, consider a wireless scheduling problem that aims to maximize information freshness by selecting which users (arms) to schedule. In this case, the state can be modeled using Age of Information (AoI) \cite{kaul2012real, YinUpdateInfocom} – a widely used metric for quantifying information freshness. Then, the AoI of an arm increases by one if the arm is not scheduled for transmission. Conversely, if the arm is scheduled, its AoI resets to one with the success probability of the transmission. Thus, the AoI will never increase by 2 or decrease to a value other than 1. Motivated by this example, we also consider the use of the sparsity information in the Whittle index-based online algorithm.

\subsection{Outline and Summary of Contributions}
\begin{itemize}[leftmargin=4mm]
 \item {\bf Algorithm Design.} We design a sliding window-based online Whittle index policy to provide a computationally efficient and adaptive solution for non-stationary RMABs (see Algorithm \ref{alg:UCWWhittleknown} and Algorithm \ref{alg:UCWWhittleunknown}). We model the transition kernel non-stationarity of arm $n$ using a total variation budget $V_n$ which is an upper bound of the sum of the total variational distance. In Algorithm \ref{alg:UCWWhittleknown}, we consider the total variation budget to be known. The Whittle index is predicted via a sliding window and upper confidence bound (UCB) approaches. Moreover, our algorithm takes into account the sparsity of the transition kernels, which significantly simplifies the complexity of optimization. In Algorithm \ref{alg:UCWWhittleunknown}, we extend to the case where the total variation budget is unknown. To estimate the budget $V_n$, we utilize a ``Bandit-over-Bandit'' approach in which $V_n$ is selected from a finite set of possible values.

\item {\bf Dynamic Regret Analysis.} We rigorously characterize an upper bound on the dynamic regret of our algorithms. Given total variation budget $V$, we show that Algorithm \ref{alg:UCWWhittleknown} can achieve sub-linear dynamic regret $\tilde O(T^{2/3}\tilde V^{1/3})$ with number of episodes $T$ (see Theorems \ref{theorem1}-\ref{theorem2}). In the case of an unknown variation budget, we show that Algorithm \ref{alg:UCWWhittleunknown} achieves dynamic regret of $\tilde O(T^{2/3}( V+2T/J)^{1/3})+\tilde O(\sqrt{TJ})$, where $J$ is the number of possible values for the variation budget (see Theorem \ref{theorem4}). It is difficult to analyze dynamic regret of an online policy under non-stationary environments, and even more difficult for RMABs; to the best of our knowledge, our paper is the first to provide dynamic regret for online learning of Whittle indices under non-stationary transition kernels.

\item{\bf Wireless Scheduling Application.}
In Section \ref{wirelessscheduling}, we demonstrate the application of our algorithm to an online wireless scheduling problem that aims to optimize functions of the AoI. Our approach involves selecting transition probabilities that maximize the value function within a specified confidence bound. We show that by leveraging the inherent sparsity of wireless scheduling, we can derive a closed-form solution for these probabilities, significantly reducing computational overhead.

\item {\bf Simulation Results.} 
We demonstrate the performance of our proposed policy by evaluating it under two applications: wireless scheduling
and one-dimensional monotonic bandits modeled as an RMAB. Our simulation results (see Table \ref{tab:performance} \& Fig. \ref{fig:result}) show that our algorithm achieves lower regret in practice compared with several baselines: the UCWhittle policy \cite{wang2023optimistic}, WIQL policy \cite{biswas2021learn}, and a uniformly randomized policy \cite{Kadota2018}.
\end{itemize}
\subsection{Related Works}\label{RelatedWorks}
As discussed, Whittle's seminal work \cite{whittle1988restless} introduced a heuristic policy for the infinite-horizon RMAB problem, known as the Whittle index policy. 
Many subsequent works have applied the Whittle index framework to different resource allocation problems \cite{NIPS2015_6d70cb65, Tripathi2019, shishertimely, le2008multi, meshram2018whittle, Kadota2018, Kadota2019, ornee2023whittle} by modeling them as RMABs.

When the transition kernels are unknown, a decision maker needs to use an online learning method to compute Whittle indices. Multiple works \cite{avrachenkov2022whittle,fu2019towards, biswas2021learn} have proposed Q-learning algorithms to compute the Whittle Index. The authors in \cite{nakhleh2021neurwin} proposed NeurWIN, and \cite{nakhleh2022deeptop} proposed DeepTOP to compute the Whittle index using neural networks. While these prior works \cite{nakhleh2021neurwin, nakhleh2022deeptop} were not focused on providing regret guarantees for their policy, other works \cite{tripathi2021online, wang2023optimistic, xiong2022reinforcement, wang2020restless, dai2011non, jung2019regret} proposed online learning algorithms for RMABs with static regret analysis. Specifically, the authors in \cite{dai2011non} analyzed regret against an optimal policy from a finite number of potential policies. The paper \cite{jung2019regret} used a Thompson sampling–based algorithm and analyzed Bayesian regret bound under a given prior distribution. \cite{wang2020restless} assumes a policy oracle and develops a meta-learning algorithm. The algorithm in \cite{xiong2022reinforcement} leverages access to an offline simulator to generate samples for any given state–action pair. In \cite{tripathi2021online}, the authors developed an online Whittle algorithm with a static regret guarantee compared to the best fixed Whittle index policy for time-varying cost function.

In \cite{wang2023optimistic}, the authors developed UCWhittle, a UCB-based online learning algorithm for the Whittle index with unknown transition kernels. Evaluating regret with respect to the optimal policy requires computing the optimal solution to the RMAB problem, which is intractable due to the combinatorial nature of the space and action spaces. To overcome this difficulty, \cite{wang2023optimistic} conducted regred analysis of UCWhittle using the Lagrangian relaxed form of the RMAB problem. Following \cite{wang2023optimistic}, we also employ the Lagrangian relaxed problem to evaluate online learning performance. We extend the unknown but stationary transition setting of \cite{wang2023optimistic} to \textit{non-stationary} transition dynamics. To the best of our knowledge, this is the first work to provide dynamic regret analysis of an online Whittle index-based policy for RMABs with non-stationary transitions. \\

\noindent Note that, due to page limitations, we defer detailed proofs of our results to an online technical report \cite{shisher2025Technical}.


\section{Online Problem Setting}

We consider an episodic RMAB problem with $N$ arms and an unknown non-stationary environment. Each arm $n \in [N]$ is associated with a unichain MDP denoted by a tuple $(\mathcal S, \mathcal A, P_{n, t}, r_n)$ at every episode $t$, where the state space $\mathcal S$ is finite, $\mathcal A=\{0, 1\}$ is a set of binary actions, $P_{n,t}: \mathcal S \times \mathcal A \times \mathcal S \mapsto [0, 1]$ is the transition
kernel of arm $n$ with $P_{n,t}(s^{\prime}|s, a)$ being the probability of transitioning to state $s^{\prime}$ from state $s$ by taking action $a$ in episode $t$, and $r_n(s, a)$ is the reward function for arm $n$ when the current state is $s$ and the action $a$ is taken. The total number of episodes is $T$ and each episode itself consists of $H$ time slots. 
We consider that the transition kernels $P_{n, t}$ are unknown and non-stationary, i.e., $P_{n, t}$ can change across episodes $t \in [T]$.

A decision maker (DM) determines what action to apply to each arm at a decision time $h\in [H]$ of an episode $t\in[T]$ under the instantaneous activation constraint that at most $M$ arms can be activated. Let $s_{n, h, t} \in \mathcal S$ be the state of arm $n$ at time $h$ of episode $t$ and $a_{n, h, t}\in \mathcal A$ be the action taken by the DM for arm $n$ at decision time slot $h$ of episode $t$. If DM decides action $a_{n, h, t}=1$, then the arm $n$ at time slot $h$ of episode $t$ is activated; otherwise, if action $a_{n, h, t}=0$, then the arm $n$ at time slot $h$ of episode $t$ is not activated.

The action taken by the DM in episode $t$ is described by a policy  $\pi_{t}: \mathcal S^{N} \mapsto \mathcal A^N$ which maps a given state $(s_1, s_2, \ldots, s_N) \in \mathcal S^{N}$ to an action $(a_1, a_2, \ldots, a_N) \in \mathcal A^N$. The corresponding expected discounted sum of rewards in episode $t$ is given by 
\begin{align}
    \!\! R_t\left(\pi_t, (P_{n,t})_{n=1}^N\right)\!=\!\!\mathbb E\left[\sum_{h=1}^H \sum_{n=1}^N  \gamma^{h-1} r_n(s_{n, h, t}, a_{n, h, t})\bigg| \pi_{t}, (P_{n,t})_{n=1}^N\right],
\end{align}
where $\gamma$ is the discount factor and $R_t\left(\pi_t, (P_{n,t})_{n=1}^N\right)$ is the expected discount sum of the rewards in episode $t$. The DM aims to maximize the expected discount sum of the rewards across all episodes, subject to arm activation constraints, i.e., 
\begin{align}\label{mainProblem}
    &\max_{\pi_t \in \Pi}R_t\left(\pi_t, (P_{n,t})_{n=1}^N\right);~~\mathrm{s.t.} \sum_{n=1}^N a_{n, h, t} \leq M, \forall h \in [H],\forall t \in [T]
\end{align}
where $\Pi$ is the set of all causal policy $\pi_t: \mathcal S^N \mapsto \{0, 1\}^N$. 

In our online learning setting, we consider that a decision maker can observe the empirical state, action, next-state $(s_{n,h,t-1}, a_{n, h, t-1}$ $,s_{n, h+1, t-1})$ transition history up to episode $t-1$ and adaptively update it's policy $\pi_t$ for the episode $t$, as illustrated in Figure \ref{fig:system}. In this paper, our goal is to develop the online policy update algorithm such that we can achieve sub-linear dynamic regret with the number of total episodes $T$. 

\subsection{Lagrangian Relaxation}
Because the main problem described in \eqref{mainProblem} is intractable, we relax the per-time slot constraint and use the Lagrangian defined below:

\begin{align}\label{Lagrangian}
  &\mathcal L(\pi_t, (P_{n, t})_{n=1}^N, \lambda)\nonumber\\  &:=\mathbb E\left[\sum_{h=1}^H \bigg(\sum_{n=1}^N  \gamma^{h-1} \bigg(r_n(s_{n, h, t}, a_{n, h, t})-\lambda a_{n, h,t}\bigg)\bigg)+\lambda M\bigg| \pi_{t}, (P_{n,t})_{n=1}^N\right],
\end{align}
where $\lambda\geq 0$ is a Lagrangian penalty that is interpreted as the cost to pay for activation.

\begin{figure}[t]
\centering
\includegraphics[width=0.5\textwidth]{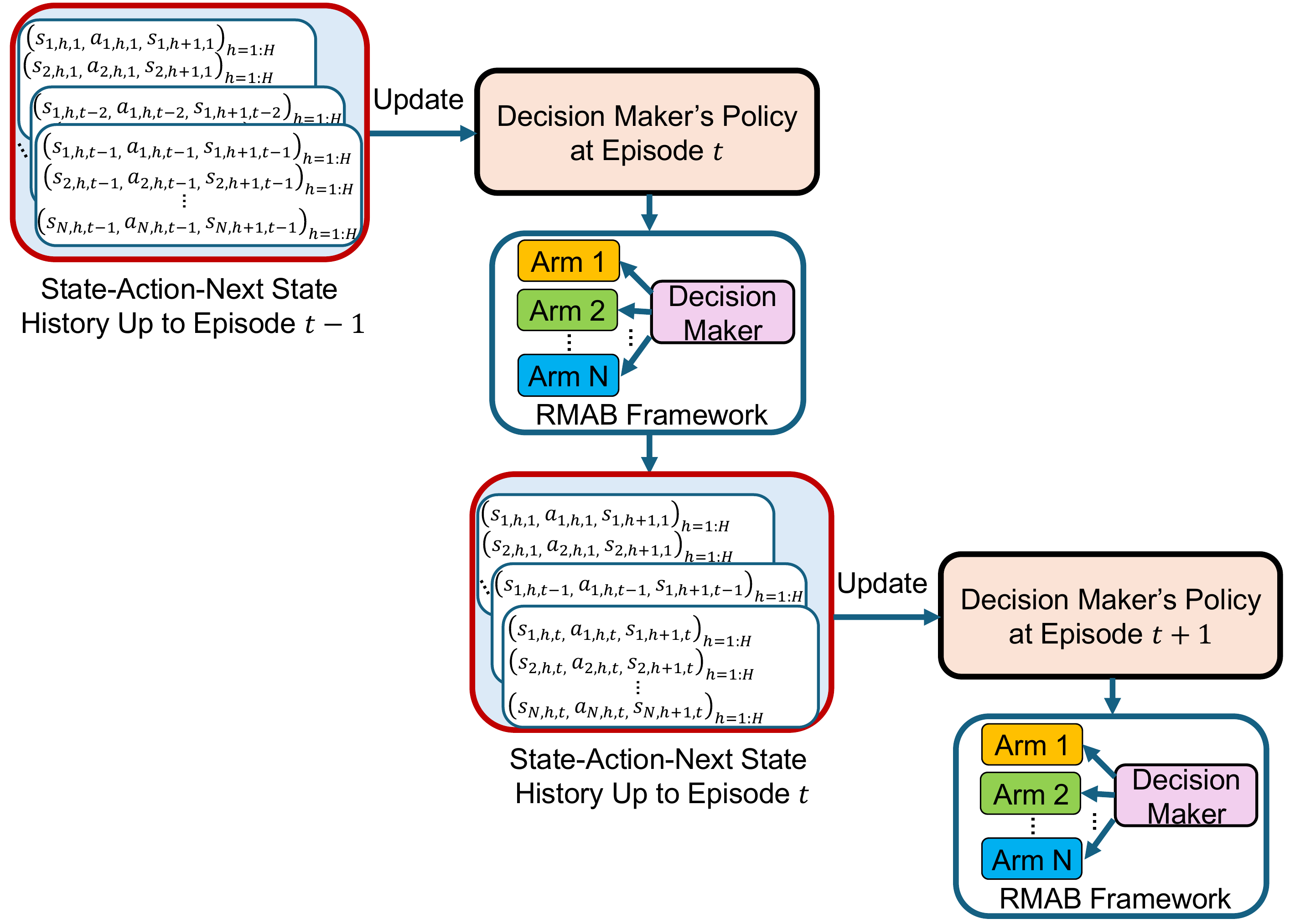}
\caption{\small Online learning setting for RMAB framework:  Decision maker can observe the empirical state, action, next-state transition history up to episode $t-1$ to update policy for the episode $t$. \label{fig:system}
}
\vspace{-3mm}
\end{figure} 

The Lagrangian problem described in \eqref{Lagrangian} enables us to decompose the combinatorial decision problem \eqref{mainProblem} into a set of $N$ independent Markov decision processes for each arm:
\begin{align}\label{decoupledLagrangian}  
   &\max_{\pi_{n, t}\in \Pi_n} U(\pi_{n, t}, P_{n,t}, \lambda),
\end{align}
where $U(\pi_{n, t}, P_{n,t}, \lambda)$ is defined as 
\begin{align}\label{decoupledLagrangianfixpolicy}
   &U(\pi_{n, t}, P_{n,t}, \lambda)=\mathbb E\left[\sum_{h=1}^H \gamma^{h-1} \bigg(r_n(s_{n, h, t}, a_{n, h, t})-\lambda a_{n, h,t}\bigg)\bigg| \pi_{n,t}, P_{n,t}\right].
\end{align}

\subsection{Whittle Index Policy}\label{whittlediscussion}
Given $\lambda$, we denote by $\phi_n(\lambda)$ the set of states for which it is optimal not to activate the arm. The set $\phi_n(\lambda)$ is given by $\phi_n(\lambda):=\{s\in \mathcal S: Q_{n, \lambda, t}(s, 0)> Q_{n, \lambda, t}(s, 1)\}$, where the action value function $Q_{n, \lambda, t}(s, a)$ associated with Bellman optimality equation for \eqref{decoupledLagrangian} is 
\begin{align}
    Q_{n, \lambda, t}(s, a)=r_n(s, a)-\lambda a+ \gamma \sum_{s' \in \mathcal S} P_{n, t}(s'|s, a) V_{n, \lambda, t}(s')
\end{align}
and the value function $V_{n, \lambda, t}(s)$ associated with Bellman optimality equation for \eqref{decoupledLagrangian} is 
\begin{align}
    V_{n, \lambda, t}(s)=\max_{a \in \mathcal A}Q_{n, \lambda, t}(s, a).
\end{align}
Intuitively, as the Lagrangian cost $\lambda$ increases, it is less likely the optimal policy activates arm $n$ in a given state. Hence, the set $\phi_n(\lambda)$ would increase monotonically. 

\begin{definition}[\textbf{Indexability}]
    An arm is said to be indexable if the set $\phi_n(\lambda)$ increases monotonically as $\lambda$ increases from $0$ to $\infty$. A restless bandit problem is said to be indexable if all arms are indexable.
\end{definition}

\begin{definition}[\textbf{Whittle Index}]
    Given indexablity and transition kernel $P_{n,t}$, the Whittle index $W_{n, t}(s; P_{n,t})$ of arm $n$ at state $s\in \mathcal S$ in episode $t$ is defined as:
    \begin{align}\label{Whittle}
       W_{n, t}(s; P_{n,t}):=\inf\{\lambda: Q_{n, \lambda, t}(s, 0)= Q_{n, \lambda, t}(s, 1)\}.
    \end{align}
\end{definition}
The Whittle index $W_{n, t}(s; P_{n,t})$ represents the minimum activation cost at which activating arm $n$ in state $s$ at episode $t$ is equally optimal to not activating it.


{\bf Whittle Index Policy} activates at most $M$ arms out of $N$ arms with the highest Whittle indices $W_{n, t}(s_{n, h, t}; P_{n,t})$. However, as we can observe from \eqref{Whittle}, we can compute the Whittle index if we know the transition kernel $P_{n,t}$ of every episode $t\in [T]$. In many applications, the transition kernel can be unknown and time-varying. For example, the transmission success probability of moving users can be unknown and time-varying, making the transition kernel in the wireless scheduling application unknown and time-varying. In Section \ref{wirelessscheduling}, we have discussed these phenomena in one wireless scheduling application.

\subsection{The Transition Kernel Model}

Now, we model how transition kernels change over every episode.

\subsubsection{Non-Stationarity:} In this section, we model the transition kernels for our non-stationary RMAB setting. We assume that the transition kernels $P_{n, t}$ may drift at varying rates across different arms $n \in [N]$ with the constraint that the total variation distance between transition kernels of two consecutive episodes is bounded from above by  
\begin{align}
    \max_{(s, a)\in \mathcal S \times A} \sum_{s'\in \mathcal S} \bigg|P_{n, t}(s'|s, a)- P_{n, t-1}(s'|s, a)\bigg|\leq \frac{V_n}{T},
\end{align}
where $V_n$ is the total variation budget across the entire $T$ episodes. The total variation budget $V_n$ represents the total non-stationarity in arm $n$ across the entire horizon, and is a standard quantity used for analzing dynamic regret in online learning literature \cite{ortner2020variational, cheung2020reinforcement}.

\subsubsection{Sparsity:} In many applications including wireless scheduling discussed in Section \ref{wirelessscheduling}, the probability transition kernels are sparse - meaning that many state transitions are not possible under certain actions. To model this we introduce $\mathcal S_{0}(s, a)$ as the set of all states $s' \in \mathcal S$ such that the probability to transit from state $s \in \mathcal S$ to state $s' \in \mathcal S$ given action $a \in \mathcal A$ is always $0$, i.e., $$\mathcal S_{0}(s, a)=\{s' \in \mathcal S: P_{n,t}(s^{\prime}|s, a)=0, \forall t\}.$$ The sets $\mathcal S_{0}(s, a)$ for all $(s, a) \in \mathcal S\times \mathcal A$ represents the sparsity of transition kernels for arm $n$. Our proposed algorithm can utilize this sparsity to reduce the complexity of the algorithm. Further, if we know sparsity (even approximately), we can use this information to learn faster by reducing exploration for certain transitions. Even in the absence of any sparsity, our results hold and the proposed algorithm are able to guarantee sublinear dynamic regret. {The DM is assumed to know the parameter $\mathcal S_{0}(s, a)$ for all $(s, a) \in \mathcal S\times \mathcal A$. 



\section{Online Learning of Whittle Indices with Known Variation Budget}\label{knownvariation}

To compute the Whittle index, we need to know transition kernels. In practice, transition kernels $P_{n,t}$ are unknown and non-stationary. In this section, we develop our {\it sliding window-based online Whittle policy} (SW-Whittle), provided in Algorithm \ref{alg:UCWWhittleknown} that (i) learns the probability transition kernels with known variation budget $V_n$ and (ii) uses them to compute the Whittle Index to pick approximately optimal policies in each episode.

Our Algorithm \ref{alg:UCWWhittleknown}, is motivated by the {\it UCWhittle} approach proposed in \cite{wang2023optimistic}. However, the {\it UCWhittle} policy is designed for static settings and does not handle time-varying transition kernels. This motivates the two main technical innovations in our policy. First, we employ a sliding window method that tracks the transition kernels of the past $w_n$ episodes, rather than all past episodes. The parameter $w_n$ is decided based on the total variation budget $V_n$. Second, we change the confidence bound provided in \cite{wang2023optimistic}. In designing the new confidence bound, we 
add a prediction horizon $w_n V_n/T$. We also discuss in Section \ref{unknownVariation} how we estimate the total variation budget $V_n$.

\subsection{Sliding Window and Confidence Bounds}
At each episode $t$ and for each arm $n$, we maintain variables $C_{t, w}^{(n)}(s', a, s)$, which count the number of transitions from state $s$ to the state $s'$ via the action $a$ observed within the past $w$ episodes, i.e. the sliding window. By using the counts for past $w$ episodes, we compute the empirical transition probabilities 
\begin{align}
    \hat P_{n, t, w}(s'|s,a):=\frac{C_{t, w}^{(n)}(s', a, s)}{C^{(n)}_{t, w}(s, a)},
\end{align}
 where we define $C^{(n)}_{t, w}(s, a):=\max\bigg\{\sum_{s^{\prime} \in \mathcal S} C_{t, w}^{(n)}(s', a, s), 1\bigg\}.$ {Using the upper confidence bound approach, we consider the following confidence radius
\begin{align}\label{confidencebound}
    d_{t}^{n}(s, a)=\sqrt{\frac{2|\mathcal S|\mathrm{log}(2|\mathcal S||\mathcal A|NT/\eta)}{C^{(n)}_{t,w}(s, a)}}+\frac{w_nV_n}{T},
\end{align}
where $0<\eta<1$ is a design parameter. The term $\frac{w_nV_n}{T}$ in the confidence radius $d_{t}^{n}(s, a)$ measures how far the transition kernels could have drifted over a window of $w_n$ episodes.}

Equipped with these definitions, the ball $B_{t}^{(n)}$ of the possible values for transition probabilities $P_{n,t}(s'|s, a)$ at any episode $t$ can be characterized as follows
\begin{align}\label{UCB}
    B_{t}^{(n)}=\bigg\{P_{n,t}: &\sum_{s' \in \mathcal S}\bigg|P_{n,t}(s'|s, a)-\hat P_{n,t, w_i}(s'|s, a)\bigg|\leq d_{t}^{(n)}(s, a),\nonumber\\ & P_{n, t}(s'|s, a)=0, \forall s' \in \mathcal S_0(s, a),\nonumber \\ &\sum_{s'\in \mathcal S} P_{n, t}(s'|s, a)=1, \forall (s, a) \in \mathcal S\times\mathcal A\bigg\}.
\end{align}
We will show later that the true transition kernel lies within this high-dimensional ball with high probability in each episode.

\begin{algorithm}[t]
\SetAlgoLined
\SetAlgoNoEnd
\SetKwInOut{Input}{input}
\caption{SW-Whittle Policy with known Budget $V_n$}\label{alg:UCWWhittleknown}
\Input{State Space $\mathcal S$, Action Space $\mathcal A$, Reward Function $r_n(s,a)$ for all $(s, a)$ and arms $n$} 
{DM initializes a Lagrange cost $\lambda_1$}\\
\For{every episode $t=1,2, \ldots, T$}
{DM decides window size $w_n=\lceil (T/V_n)^{2/3}\rceil$ for all $n \in [N]$\\
Arm $n$ starts with state $s_{n, 0}$\\
{DM predicts $\tilde P_{n,t}$ for all arm $n \in [N]$ using \eqref{optimistic1} with $\lambda_t$.}\\ 
{DM computes Whittle Index $W_{n, t}(s, \tilde P_{n, t}), \forall s\in \mathcal S, n \in [N]$ using \eqref{Whittle}.}\\
\For{$h=1, 2, \ldots, H$}
{DM activates $M$ arms (i.e., action=1) with highest Whittle Indices $W_{n, t}(s_{n,h, t}, \tilde P_{n, t})$.  \\
Observes the next state $s_{n, h+1, t} \sim P_{n,t}(\cdot|s_{n,h,t}, a_{n,h,t})$\\
}
{Update $\lambda_{t+1}\gets$ M-th highest Whittle Index}\\
DM updates counts $C_{t, w_n}^{(n)}$\\
}

\end{algorithm}

\subsection{Online Whittle Indices}
We predict the transition probabilities in an optimistic approach. We select the optimistic transition probability $\tilde P_{n, t}$ for each arm $n$ that maximizes the value function within the confidence bound. The optimization problem for predicting the transition probability $\tilde P_{n, t}$ is given by

\begin{align}\label{optimistic1}
    &\max_{P_{n, t} \in B^{(n)}_{t}} V_{n, \lambda, t}(s), \nonumber\\
    &\quad \mathrm{s.t.}~V_{n, \lambda, t}(s)=\max_{a \in \mathcal A} Q_{n, \lambda, t}(s, a), \nonumber\\
    &\quad\quad \quad Q_{n, \lambda, t}(s, a)=r_n(s,a)-\lambda a + \sum_{s'}P_{n, t}(s'|s,a) V_{n, \lambda, t}(s').
\end{align}

To solve \eqref{optimistic1}, we use the extended value iteration described in \cite{auer2008near}. In every iteration of the extended value iteration method, for every state-action pair $(s, a) \in \mathcal S \times \mathcal A$, we require to solve an inner optimization problem
\begin{align}
    \max_{P_{n, t} \in B^{(n)}_{t}} \sum_{s'}P_{n, t}(s'|s,a) V_{n, \lambda, t}(s').
\end{align}

The extended value iteration algorithm first sorts the set of states $\mathcal S= \{s_1,s_2, \ldots, s_{|\mathcal S|}\}$ in descending order of their value-functions, such that $V_{n,t}(s_i) \geq V_{n,t}(s_{i+1})$. Next, we initialize the transition probabilities by setting 
$$\tilde P_{n, t}(s_1|s,a) =\hat P_{n, t}(s_1|s,a) + \frac{d_t^{(n)}(s,a)}{2}$$ for the state with the highest value function, and $$\tilde P_{n, t}(s_i|s, a)=\hat P_{n, t}(s_i|s,a)$$ for all other states $i > 1$. If the resulting probabilities sum to more than 1, we restore validity by iteratively reducing
the probability of the state with the lowest value function. We start with $l= |\mathcal S|$ and set 
$$\tilde P_{n, t}(s_l|s, a)=\max\bigg\{0,1-
\sum_{j \neq l}P_{n, t}(s_j|s,a)\bigg\}.$$ This process is repeated for decreasing
$l$ until the transition probabilities sum to 1.

As a result of the maximization procedure of \eqref{optimistic1}, the true value function is upper bounded by the value function under the predicted transition kernel provided that the confidence bound in \eqref{UCB} holds. This upper bound value function will later allow us to prove regret bounds. Using the predicted transition kernel $\tilde P_{n, t}$, we compute $W_{n, t}(s: \tilde P_{n, t})$, the Whittle index of state $s\in \mathcal S$ for arm $n$ as defined in \eqref{Whittle}. Finally, we update Lagrange multiplier $\lambda^{t+1}$ as the $M$-th highest Whittle index at time slot $H$ of episode $t$. 

\subsection{Regret Analysis}

It is well known in RMAB literature, the problem of obtaining an optimal policy for an RMAB
problem is generally intractable \cite{papadimitriou1994complexity}. Consequently, using an optimal policy for an RMAB is an
impractical benchmark to evaluate the performance of online learning. To address this issue, \cite{tripathi2021online} and \cite{wang2023optimistic} used the Lagrangian relaxed problem or alternatively the best Whittle index policy to assess the
performance of an online learning algorithm. Following \cite{wang2023optimistic}, we will also employ the Lagrangian relaxed problem to evaluate the online learning performance in this paper.

The cumulative dynamic regret in $T$ episodes is given by 
\begin{align}\label{Regret1}
\mathrm{Reg}(T)=\sum_{t=1}^T\mathcal L(\pi^*_t, (P_{n, t})_{n=1}^N, \lambda^*_t)-\mathcal L(\pi_t, (P_{n, t})_{n=1}^N, \lambda_t),
\end{align}
where the Lagrangian $\mathcal L(\cdots)$ is defined in \eqref{Lagrangian} and $(\pi^*_t, \lambda^*_t)$ are the optimal solution of 
\begin{align}
    \min_{\lambda\geq 0}\max_{\pi_t\in \Pi} 
     \mathcal L(\pi_t, (P_{n, t})_{n=1}^N, \lambda).
\end{align}

To create a regret bound, we first need to establish how good our estimates of the time-varying transition kernel are. To do so, we will bound the probability that the true kernel is outside the high-dimensional ball $B_{t}^{(n)}$ introduced in \eqref{UCB}. Lemma 1 describes the result in detail. 

\begin{lemma}\label{lemma1}
    Given $0<\eta<1$ in \eqref{confidencebound}, the probability that the true kernel $P_{n,t}$ lies within the high-dimensional Ball $B_{t}^{(n)}$ (described by eq. \ref{UCB}) is greater than or equal to $1-\eta$, i.e., $\mathrm{Pr}( P_{n,t} \in B_{t}^{(n)}, \forall n, \forall t) \geq 1-\eta$.
\end{lemma}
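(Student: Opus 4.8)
The plan is to control, for each fixed arm $n$, episode $t$, and state-action pair $(s,a)$, the deviation between the empirical transition distribution $\hat P_{n,t,w}(\cdot|s,a)$ and the true $P_{n,t}(\cdot|s,a)$, and then union-bound over the relevant index sets. The natural tool is a concentration inequality for the $\ell_1$ distance (equivalently, twice the total-variation distance) between an empirical distribution over $|\mathcal S|$ outcomes and the distribution generating the samples; a standard bound (see e.g. Weissman et al.\ / the $\ell_1$-deviation bound used in UCRL-type analyses) gives $\mathrm{Pr}\big(\sum_{s'}|\hat P(s'|s,a) - P(s'|s,a)| \ge \delta\big) \le 2^{|\mathcal S|} e^{-m\delta^2/2}$ when the $m$ samples are i.i.d.\ from the \emph{same} distribution $P$. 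Setting $2^{|\mathcal S|} e^{-m\delta^2/2}$ equal to the per-pair failure budget and solving for $\delta$ produces exactly the $\sqrt{2|\mathcal S|\log(\cdot)/m}$ form appearing in $d^n_{t,1}$ and $d^n_{t,2}$; here $m = C^{(n)}_{t,t-1}(s,a)$ in the stationary case and $m = C^{(n)}_{t,w}(s,a)$ in the non-stationary case.

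For the stationary set $\mathcal Z_1^{(n)}$ the argument is clean: all transitions from $(s,a)$ ever observed are i.i.d.\ samples from the fixed kernel $q(\cdot,a,s)$, so with $m = C^{(n)}_{t,t-1}(s,a)$ the event $\{\sum_{s'}|\hat P_{n,t,t-1}(s'|s,a) - P_{n,t}(s'|s,a)| \le d^n_{t,1}(s,a)\}$ fails with probability at most $\eta_1/(|\mathcal Z_1^{(n)}|NT)$ after choosing the constant inside the log appropriately; a union bound over $(s,a)\in\mathcal Z_1^{(n)}$, $n\in[N]$, and $t\in[T]$ costs a factor $|\mathcal Z_1^{(n)}|NT$ and yields total failure probability $\le \eta_1$. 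For the non-stationary set $\mathcal Z_2^{(n)}$ the samples in the window of $w$ episodes come from \emph{different} kernels $P_{n,t-1},\dots,P_{n,t-w}$, so $\hat P_{n,t,w}$ concentrates not around $P_{n,t}$ but around a mixture of those past kernels. The fix is the triangle-type decomposition: $\|\hat P_{n,t,w}(\cdot|s,a) - P_{n,t}(\cdot|s,a)\|_1 \le \|\hat P_{n,t,w}(\cdot|s,a) - \bar P(\cdot|s,a)\|_1 + \|\bar P(\cdot|s,a) - P_{n,t}(\cdot|s,a)\|_1$, where $\bar P$ is the appropriate sample-weighted average of the kernels active during the window. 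The first term is handled by the same concentration bound (the samples are independent given the realized kernels, and the bound is uniform over the underlying distributions), contributing the $\sqrt{2|\mathcal S|\log(2|\mathcal Z_2^{(n)}|NT/\eta_1)/C^{(n)}_{t,w}(s,a)}$ piece; the second is a pure bias term bounded deterministically by $\sum_{j=1}^{w}\|P_{n,t-j}(\cdot|s,a) - P_{n,t}(\cdot|s,a)\|_1 \le \sum_{j=1}^w j\,\epsilon_n \le w\epsilon_n$ up to constants, using the per-episode total-variation bound $\epsilon_n$ and telescoping — this is precisely the additive $w\epsilon_n$ in $d^n_{t,2}$. Union-bounding over $\mathcal Z_2^{(n)}$, $n$, $t$ gives failure probability $\le \eta_2$ (the statement writes the log with $\eta_1$, which I would treat as a typo for $\eta_2$, or absorb by a constant). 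Finally, the sparsity constraints $P_{n,t}(s'|s,a)=0$ for $s'\in\mathcal S_0^{(n)}(s,a)$ hold for the true kernel by definition of $\mathcal S_0^{(n)}$, so they impose no additional failure probability. Combining the two bad events by a union bound, $P_{n,t}\in B_t^{(n)}$ for all $n,t$ with probability $\ge 1-(\eta_1+\eta_2)=1-\eta$.

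The main obstacle is the non-stationary case: one must argue carefully that the $\ell_1$-concentration bound still applies when samples within the window are drawn from a \emph{sequence} of (random, data-dependent) kernels rather than one fixed kernel, i.e.\ that conditioning on the realized transition kernels and on the sample counts $C^{(n)}_{t,w}(s,a)$ makes the per-episode transitions out of $(s,a)$ conditionally independent with the right marginals, so the bias/variance split is legitimate — and then that the deterministic bias is genuinely $O(w\epsilon_n)$ via the telescoping of consecutive total-variation bounds. Everything else (choosing the log-constants so the per-pair budgets multiply out to $\eta_1$ and $\eta_2$, and the final union bound) is routine bookkeeping.
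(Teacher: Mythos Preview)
Your approach is exactly the paper's: the Weissman-type $\ell_1$-concentration bound for the empirical kernel, applied directly on $\mathcal Z_1^{(n)}$ and via a bias--variance triangle split on $\mathcal Z_2^{(n)}$, followed by a union bound over $(s,a)$, $n$, and $t$; you also correctly spot the $\eta_1/\eta_2$ typo in $d^n_{t,2}$. One small slip to fix: you bound the bias $\|\bar P - P_{n,t}\|_1$ by $\sum_{j=1}^{w}\|P_{n,t-j}-P_{n,t}\|_1 \le \sum_{j=1}^w j\,\epsilon_n$ and then say this is $w\epsilon_n$ ``up to constants'' --- but that sum is $O(w^2\epsilon_n)$, not $O(w\epsilon_n)$. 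The correct argument (which the paper uses, albeit tersely) is that $\bar P$ is a \emph{convex combination} of the $P_{n,t-j}$, so by convexity of the $\ell_1$ norm $\|\bar P - P_{n,t}\|_1 \le \max_{1\le j\le w}\|P_{n,t-j}-P_{n,t}\|_1 \le w\epsilon_n$, the last inequality by telescoping the per-episode total-variation bound.
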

\begin{proof}
See Appendix \ref{plemma1}
\end{proof}

Lemma 1 implies that for \textit{every} episode $t$, we can provide a confidence region in which the true transition kernel will lie with high probability.

Next, using this result, Theorem~\ref{theorem1} characterizes the upper bound for $\mathrm{Reg}(T)$. 

\begin{theorem}\label{theorem1}
    Given $0<\eta<1$ in \eqref{confidencebound} and any window size $w_n$, the cumulative dynamic regret of Algorithm \ref{alg:UCWWhittleknown}, with probability $1-\eta$, satisfies:
    \begin{align}
        \mathrm{Reg}(T)\leq& \sum_{t=1}^T O\bigg(\sum_{n=1}^N 2|\mathcal S| G_{t,n}(w_n)+w_nV_nH/T\bigg),
    \end{align}
where $G_{t,n}(w)=\max_{(s,a)\in \mathcal S \times \mathcal A}g_{t,n}(s, a, w)$, the function $g_{t,n}(s, a, w)= \mathbb E_{P_{n, t}, \pi_{n, t}}[ \alpha_{t}^{(n)}(s, a)/\sqrt{C_{t, w}^{(n)}(s,a)}]$ is non-increasing in $w$, and $\alpha_{t}^{(n)}(s, a)$ is a random variable that denotes the number of visit at $(s,a)$ in episode $t$. 
\end{theorem}
\begin{proof}[Proof Sketch] We focus on bounding the dynamic regret when the confidence bound  holds.
\begin{align}
    \mathrm{Reg}^{(t)}
    &=\sum_{t=1}^T\mathcal L(\pi^*_t, (P_{n, t})_{n=1}^N, \lambda^*_t)-\mathcal L(\pi_t, (P_{n, t})_{n=1}^N, \lambda_t)\nonumber\\
    &\leq \sum_{t=1}^T\mathcal L(\pi^*_t, (P_{n, t})_{n=1}^N, \lambda_t)-\mathcal L(\pi_t, (P_{n, t})_{n=1}^N, \lambda_t)\nonumber\\
    &=\sum_{t=1}^T\sum_{n=1}^NU(\pi_{n, t}^*, P_{n,t}, \lambda_t)-U(\pi_{n, t}, P_{n,t}, \lambda_t)\nonumber\\
    &\leq \sum_{t=1}^T\sum_{n=1}^NU(\pi_{n, t}, \tilde P_{n,t}, \lambda_t)-U(\pi_{n, t}, P_{n,t}, \lambda_t), 
 \end{align}    
where the first inequality holds because $\lambda^*_t$ minimizes $\mathcal L(\pi^*_t, (P_{n, t})_{n=1}^N, \lambda)$ for all $
\lambda\geq 0$ and the second inequality holds because of \eqref{optimistic1}. Then, by applying value difference theorem \cite[Theorem 6.4] {wang2023optimistic}, we have 

\begin{align}
    &U(\pi_{n, t}, \tilde P_{n,t}, \lambda_t)-U(\pi_{n, t}, P_{n,t}, \lambda_t)\nonumber\\
    &\!\leq \mathbb E_{P_{n, t}, \pi_{n, t}} \bigg[ \sum_{(s,a)\in \mathcal S} \alpha_{t}^{(n)}(s, a) \sum_{s'\in \mathcal S}\bigg|\tilde P_{n, t}(s'|s, a)-P_{n, t}(s'|s, a)\bigg|\bigg]V_{max}\nonumber\\
    &=\mathbb E_{P_{n, t}, \pi_{n, t}} \bigg[ \sum_{(s,a)\in \mathcal S} \alpha_{t}^{(n)}(s, a) \sum_{s'\in \mathcal S}d_{t}^{(n)}(s,a)\bigg]V_{max}
\end{align}
where $V_{max}=\max_{n \in [N], s \in \mathcal S} V_n(s'; \pi_{n,t}, P_{n,t})$. After this, by substituting the value of $d_{t}^{(n)}(s,a)$, we can obtain Theorem \ref{theorem1}. For a detailed proof, please see Appendix \ref{Ptheorem1}.
\end{proof}

At a high level, the regret bound in Theorem \ref{theorem1} is characterized by a fundamental trade-off. For a fixed state space $\mathcal {S} $, the first term depends on $G_{t,n}(w_n)$, a non-increasing function of the window size $w_n$. Conversely, the second term scales linearly with $w_n$. This structure highlights the necessity of carefully tuning the window size $w_n$. A window that is too small leads to a high learning error $G_{t,n}(w)$ because of an insufficient number of samples $C_{t, w_n}^{(n)}(s,a)$ for every feasible state-action pair $(s,a)$. However, if we increase the window size $w_n$,  the algorithm fails to track non-stationary changes. 

In Theorem \ref{theorem2}, we set the window size $w_n$ as a function of the total variation distance $V_n$ and find the dynamic regret with respect to number of episodes $T$ and the total variation budget $V_n$. 

\begin{theorem}\label{theorem2}
    Given $0<\eta<1$ in \eqref{confidencebound}, if there exists a positive probability to visit every $(s,a) \in \mathcal S$ at least once in every episode $t\in [T]$ for all arms $n\in [N]$, then, with probability $1-\eta$, the cumulative dynamic regret of Algorithm \ref{alg:UCWWhittleunknown} satisfies:
    \begin{align}
        \mathrm{Reg}(T)\leq \sum_{n=1}^N \tilde O(T^{2/3}V_n^{1/3}).
    \end{align} 
\end{theorem}
\begin{proof}
See Appendix \ref{Ptheorem2}.
\end{proof}
Theorem \ref{theorem2} establishes a sub-linear dynamic regret with the number of episodes $T$ when the window size is $w_n=\lceil (T/V_n)^{2/3}\rceil$. For a detailed proof of Theorem \ref{theorem2}, please see Appendix A.3 of our technical report \cite{shisher2025Technical}.

However, to determine the window size $w_n$, it is necessary to know the total variation distance $V_n$, which may not be known beforehand. In the next section, we will discuss how to estimate the total variation distance $V_n$ and the impact of estimation error on dynamic regret. 
\section{Online Learning of Whittle Indices with unknown Variation Budget}\label{unknownVariation}

{We adopt the Bandit over Bandit approach \cite{cheung2022hedging, wei2023provably} for the estimating the variation budget $V_n$. 

\begin{algorithm}[t]
\SetAlgoLined
\SetAlgoNoEnd
\SetKwInOut{Input}{input}
\caption{SW-Whittle Policy with unknown Budget $V_n$}\label{alg:UCWWhittleunknown}
\Input{State Space $\mathcal S$, Action Space $\mathcal A$, Reward Function $r_n(s,a)$ for all $(s, a)$ and $J_n$ for all arms $n$} 
{Initialize a Lagrange cost $\lambda_{1}$ and a set of $J_n$ budget values, where $i$-th budget is $V_n(i)=V_{max}-(i-1)V_{max}/J_n$}\\
{Initialize parameters $\beta_n\in [0, 1]$, $\alpha_{n, i}=1$ and $X_{n, i}=0$}\\
\For{every episode $t=1,2, \ldots, T$}
{Set $p_{i, n}(t)=(1-\beta_n)\frac{\alpha_{n, i}}{\sum_{i=1}^{J_n} \alpha_{n, i}}+\beta_n/J_n$\\
Select $i_{n, t}\in\{1, 2, \ldots, J_n\}$ randomly according to probabilities $p_{1,n}(t), p_{2, n}(t), \ldots, p_{J_n, n}(t)$\\
Select window size $w_n=\lceil (T/V_n(i_{n, t})^{2/3}\rceil$\\
Arm $n$ starts with state $s_{n, 0}$\\
{Predict $\tilde P_{n,t}$ for all arm $n \in [N]$ using \eqref{optimistic1} with $\lambda_t$.}\\ 
{Compute Whittle Index $W_{n, t}(s; \tilde P_{n, t}) \forall s\in \mathcal S, n \in [N]$ with $\tilde P_{n,t}$ using \eqref{Whittle}.}\\
\For{$h=1, 2, \ldots, H$}
{DM activates $M$ arms (i.e., action=1) with highest Whittle Indices $W_{n, t}(s_{n,h, t}; \tilde P_{n, t})$.  \\
Observes the next state $s_{n, h+1, t} \sim P_{n,t}(\cdot|s_{n,h,t}, a_{n,h,t})$\\
}
Observe normalized episodic reward $\tilde r_{n}(t)$\\
Update $X_{n, i_{n, t}}\gets \tilde r_n(t)/p_{i_{n, t}, n}(t)$\\
Update $\alpha_{n, i_{n, t}}\gets \alpha_{n, i_{n, t}}\exp(\beta_n X_{n, i_{n, t}}/J_n)$\\
{Update $\lambda_{t+1}\gets$ M-th highest Whittle Index}\\
Updates counts $C_{t, w_n}^{(n)}$\\
}

\end{algorithm}

\subsection{Bandit over Bandit}
In this estimation approach, we solve another bandit problem to select $V_n$
from a finite set of possible budget values based on the history by using EXP3 algorithm \cite{auer2002nonstochastic}.  Our SW-Whittle algorithm with unknown variation budget $V_n$ is provided in Algorithm \ref{alg:UCWWhittleunknown}, where we incorporate EXP3 algorithm and our choice of finite set of possible budget values. 

We first create a set of possible budget values. Towards this objective, we get the maximum value for the variation budget as $V_{n, max}=2T$. The total variation budget $V_n$ is bounded by $2T$ in the worst-case scenario, as  
\begin{align}
    \max_{(s, a)\in \mathcal S \times A} \sum_{s'\in \mathcal S} \bigg|P_{n, t}(s'|s, a)- P_{n, t-1}(s'|s, a)\bigg| \leq 2,
\end{align}
where the equality holds in the extreme case when the consecutive distributions are completely disjoint. However, for sublinear regret to be achievable, we typically consider the regime where $V_n=o(T)$.

By using $V_{n, max}$, we can now define the set of quantized drift values as
$\{V_{n, max}, V_{n, max}-V_{n, max}/J_n, V_{n, max}-2V_{n, max}/J_n, \ldots, V_{n, max}/J_n\},$
where $J_n$ is the number of quantization levels. This approach of quantifying and approximately estimating drift values is novel within the online learning literature. We will show in Theorem \ref{theorem4} that the number of levels $J_n$ affects the dynamic regret (more levels means more accurate tracking of $V_n$ but also slower learning in the Bandit-over-Bandit approach).} 

\subsection{Regret Analysis}

Next, using this result, Theorem~\ref{theorem4} characterizes the upper bound for $\mathrm{Reg(T)}$.

Let $V_n$ be the actual total variation measure, given by 
\begin{align}
       V_n= \sum_{t=1}^T\max_{(s, a)\in \mathcal S \times A} \sum_{s'\in \mathcal S} \bigg|P_{n, t}(s'|s, a)- P_{n, t-1}(s'|s, a)\bigg|.
    \end{align}

\begin{theorem}\label{theorem4}
    Given $0<\eta<1$ in \eqref{confidencebound}, if there exists a positive probability to visit every $(s,a) \in \mathcal S\times \mathcal A$ at least once in every episode $t\in [T]$ for all arms $n\in [N]$, then, with probability $1-\eta$, the cumulative dynamic regret of Algorithm \ref{alg:UCWWhittleunknown} satisfies:
    \begin{align}
        \mathrm{Reg(T)}\leq \sum_{n=1}^N\tilde O(T^{2/3}( V_n+2T/J_n)^{1/3})+\tilde O(\sqrt{TJ_n}).
    \end{align} 
\end{theorem}
\begin{proof}
See Appendix \ref{Ptheorem4}.
\end{proof}

The regret bound in Theorem \ref{theorem4} involves two error components: The first term is the learning error of RMAB due to unknown transition kernels. If the actual total variation distance $V_n$ increases, the learning error increases. For a fixed total variation distance, we achieve sub-linear dynamic regret with the number of episodes $T$; The second term is a variation budget estimation error (BoB learning error) that increases with $J_n$. Overall, our algorithm can achieve sub-linear regret ($T^{2/3}$) with the number of episodes $T$ in the regime where $V_n=o(T)$.
Our dynamic regret $\tilde{O}(T^{2/3}V_n^{1/3})$ and the considered regime $V_n=o(T)$ align with established results for online learning in non-stationary MDPs  \cite{ortner2020variational, cheung2020reinforcement, marin2024metacurl, wei2023provably, wei2021non}.

\section{Wireless Scheduling Application}\label{wirelessscheduling}
\begin{figure*}[t]
    \centering
    \begin{subfigure}[b]{0.4\textwidth}
        \centering
        \includegraphics[width=\textwidth]{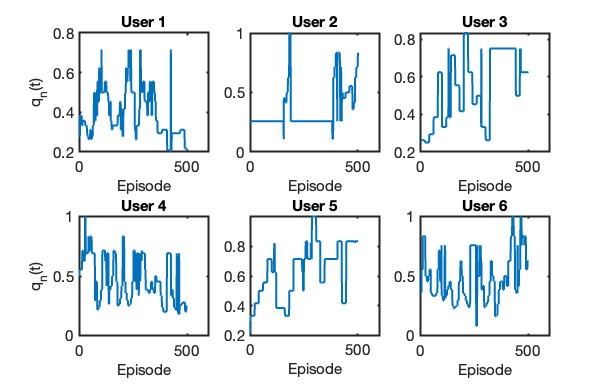}
        \caption{$q_n(t)$ vs. $t$ }
        \label{fig:successprob}
    \end{subfigure}
%
 \hspace{1mm} 
    \begin{subfigure}[b]{0.40\textwidth}
        \centering
        \includegraphics[width=\textwidth]{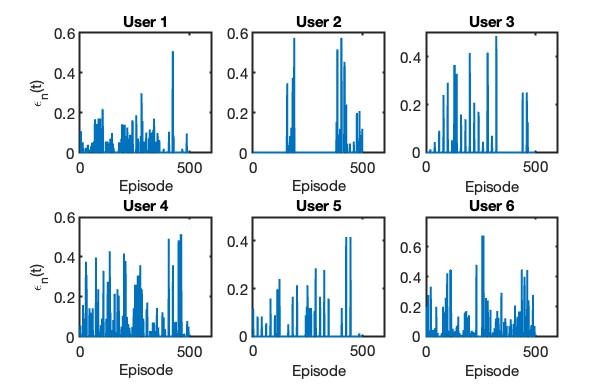}
        \caption{$\epsilon_n(t)=|q_n(t)-q_n(t-1)|$ vs $t$}
        \label{fig:variation}
    \end{subfigure}
    
    \caption{Analysis of channel state variation over 500 episodes: (a) Transmission success probability $q_n(t)$ vs. Episode $t$ and (b) Variation of transmission success probability $\epsilon_n(t)=|q_n(t)-q_n(t-1)|$ vs Episode $t$.}
    \label{fig:combined_metrics}
\end{figure*}

\begin{table*}[t]
    \centering
    \begin{tabular}{l c c c c c c}
        \toprule
        \textbf{Applications}  & \textbf{$(N, M)$} & \textbf{Our Policy} & \textbf{UCWhittle}  & \textbf{UCWhittle+Win} & \textbf{Random} & \textbf{WIQL} \\
        \midrule
        {1-D Bandit} & (10, 1) & {\bf 957}{\tiny $\pm 155$} &  6528{\tiny $\pm 1996$}&  6377{\tiny $\pm 1452$} & 11916{\tiny $\pm  2154$} & 12060{\tiny $\pm 2226$} \\
        & (10,4) & {\bf 2119}{\tiny  $\pm 237$} &  27620 {\tiny $\pm 5850$} & 21628 {\tiny $\pm 3054$} &   28349 {\tiny $\pm 4668$}  &  28068{\tiny $\pm 4601$} \\
        & (20,4) & {\bf 2065}{\tiny  $\pm  368$} & 28985 {\tiny $\pm 7417$} &  24405 {\tiny $\pm 7286$} &   39358 {\tiny $\pm  6145$}  & 40314-{\tiny $\pm 6491$} \\
        \midrule
        {Scheduling} &  (10,1) & {\bf  503}{\tiny $\pm 31$} &  981{\tiny $\pm 55$} & 787{\tiny $\pm 42$} &  3239{\tiny $\pm 115$} &  3408 {\tiny $\pm 98$}\\
        {(Synthetic)}& (10, 4) & {\bf 945}{\tiny $\pm  94$} &  1183{\tiny $\pm 152$} &  1095{\tiny $\pm 118$} &   2598{\tiny $\pm  49$} &  2216{\tiny $\pm 64$}\\
        & (20,4) & {\bf 1276}{\tiny $\pm  90$} & 2097{\tiny $\pm 189$} & 1808{\tiny $\pm   138$} & 7094{\tiny $\pm 189$} &  6397{\tiny $\pm 279$}\\
         \midrule
        {Scheduling (Real)} &  (6,1) & {\bf  2003}{\tiny $\pm 32$} &  4821{\tiny $\pm 176$} & 4772{\tiny $\pm 99$} &  10539{\tiny $\pm 80$} &  5171 {\tiny $\pm 163$}\\
        & (6,3) & {\bf  1964}{\tiny $\pm 35$} &  3544{\tiny $\pm 73$} & 3504{\tiny $\pm 73$} &  19587{\tiny $\pm 38$} &  4163 {\tiny $\pm  72$}\\
        \bottomrule
    \end{tabular}
    \caption{$\mathrm{Reg}(T)$ for different values of $N$ and $M$.}\label{tab:performance}
    \vspace{-3mm}
\end{table*}

In this section, we apply the results developed in the previous sections to solve an online wireless resource allocation problem for remote estimation systems. 

Consider a remote estimation system, where a receiver estimates $N$ time-varying targets observed by $N$ sources. To estimate the target of source $n$, the receiver uses the most recent observation delivered from the source $n$. Due to limited communication resources (e.g., orthogonal channel frequency), only $M$ of the $N$ sources can transmit their current observation in any given time-slot. Moreover, the communication channel between a source and the receiver can be unreliable and the channel states can vary over time. We consider independent Bernoulli channels between every source and the receiver, with probability of success $q_n(t)$ for source $n$ at each time slot $h$ of episode $t$. 

Let $a_{n, h, t}\in \{0, 1\}$ be scheduling decision of source $n$ at time slot $h$ of episode $t$. if $a_{n, h, t}=0$, the source $n$ is not allowed for transmission; otherwise, If $a_{n, h, t}=1$, the source $n$ is scheduled for transmission at time $t$ and the transmitted observation is delivered in the next time slot $t+1$. 

For every source $n$, the age of information or simply AoI at the receiver, denoted by $\Delta_{n, h, t}$, measures the time difference between the current time $h$ and the generation time of the most recently delivered observation from source $n$ at episode $t$. We assume active sources, i.e. in any time-slot, any source $n$, when it is scheduled for transmission, generates a fresh update at will and sends. Let $u_{n, h, t}$ be a Bernoulli random variable with parameter $q_n(t)$ that denotes channel reliability between the $n$-th source and the receiver. Then, we have 
\begin{align}\label{AoIProcess}
\!\! \Delta_{n, h+1, t}=
  \begin{cases}
        \Delta_{n, h, t}+1, &\text{if}~a_{n, h, t}\neq 1~\text{or}~ u_{n, h, t}=0, \\
        1, &\text{if}~a_{n, h, t}= 1~\text{and}~ u_{n, h, t}=1.
    \end{cases}
    \end{align} 

Prior works established that estimation accuracy can be represented by function of AoI values. We consider general reward functions of AoI as
our metric of interest. For each source $n$, let $r_n(\Delta_{n, h, t})$ denote a reward function for estimation accuracy. Let $\pi$ be a scheduling policy that decides which sources to schedule in every time-slot. The AoI process $\Delta_{n, h, t}$ depends
on $\pi$ and the channel processes. Then, the corresponding expected discounted sum of rewards in episode $t$ is given by 
\begin{align}\label{objectivewireless}
    R_t\left(\pi_t, (P_{n,t})_{n=1}^N\right):=\mathbb E\left[\sum_{h=1}^H \sum_{n=1}^N  \gamma^{h-1} r_n(\Delta_{n, h, t})\bigg| \pi_{t}, (P_{n,t})_{n=1}^N\right],
\end{align}
where the next-state transition $P_{n, t}(\delta'|\delta, a)$ is governed by the transition dynamics \eqref{AoIProcess} and depends on the parameter $q_n(t)$ and the scheduling decision $a_{n, h, t}=a$: when the source is scheduled, i.e., $a=1$, AoI drops to $1$ with probability $P_{n, t}(\delta'=1|\delta, 1)=q_n(t)$ and increases to $\delta+1$ with probability $P_{n, t}(\delta'=\delta+1|\delta, 1)=1-q_n(t)$; otherwise, when the source is not scheduled, i.e., $a=0$, AoI always increases by $1$, i.e., $P_{n, t}(\delta'=\delta+1|\delta, 1)=1$. 

From the above discussion, we can see that the probability transition kernels for the wireless scheduling problem are sparse. We have 

$$\mathcal S_0(\delta, 1)=\{\delta\in \mathbb N: \delta \neq 1,\delta+1\}$$
and 
$$\mathcal S_0(\delta, 0)=\{\delta\in \mathbb N: \delta \neq \delta+1\}.$$
Moreover, we only need to determine the transmission success probability $q_n(t)$ to get the transition kernels $P_{n, t}$. Hence, we can have empirical success probability $q_n(t)$ as 
\begin{align}
    \hat q_n(t):=\frac{\sum_{\delta \in \mathbb N} C_{t, w}^{(n)}(1, a, \delta)}{\sum_{\delta\in \mathbb N} C^{(n)}_{t, w}(\delta, a)}
\end{align}
and substitute this value for any $\hat P_{n, t}(1|\delta,1)$. Moreover, if $r_n(\delta)$ is a non-increasing function of AoI $\delta$, we get the closed form solution of \eqref{optimistic1} as 
\begin{align}
    \tilde P_{n, t}(1|\delta,1)=\min\bigg\{\hat q_n(t)+\frac{d_t^{(n)}(\delta, 1)}{2}, 1\bigg\}
\end{align}
and 
\begin{align}
    \tilde P_{n, t}(\delta+1|\delta,1)=1-P_{n, t}(1|\delta,1).
\end{align}

Then, using the predicted transition kernel, we get the Whittle index \eqref{Whittle}. Whittle index for the AoI function optimization problem with the predicted transition kernel can be obtained in closed-form or with very low-complexity \cite{Tripathi2019, shishertimely}.


\section{Simulation Results}

\begin{figure*}[t]
  \centering
  
  \begin{subfigure}[t]{0.30\textwidth}
\includegraphics[width=\textwidth]{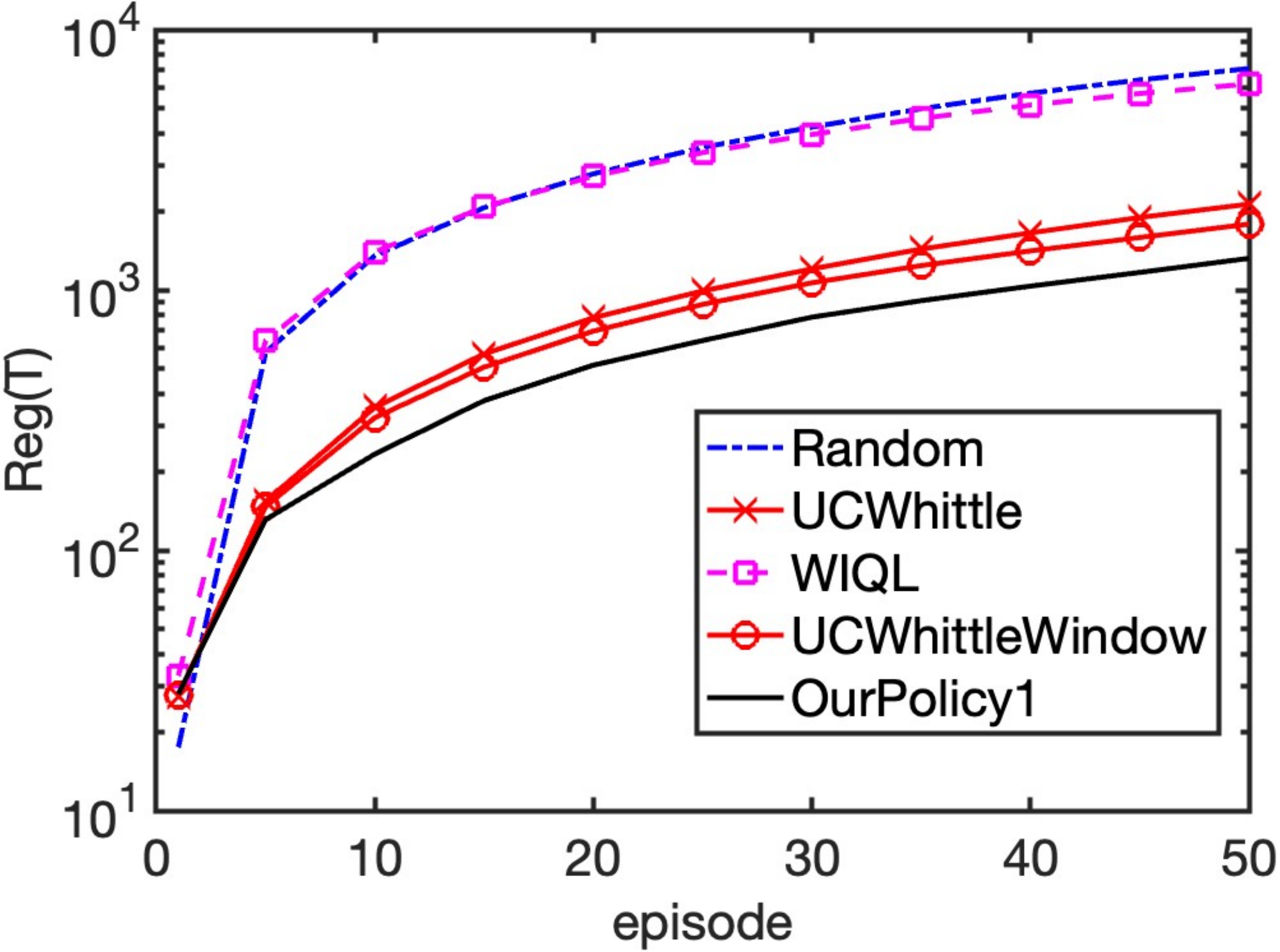}
  \subcaption{Scheduling (Synthetic).}
\end{subfigure}
%
\hspace{1mm} 
  \begin{subfigure}[t]{0.30\textwidth}
\includegraphics[width=\textwidth]{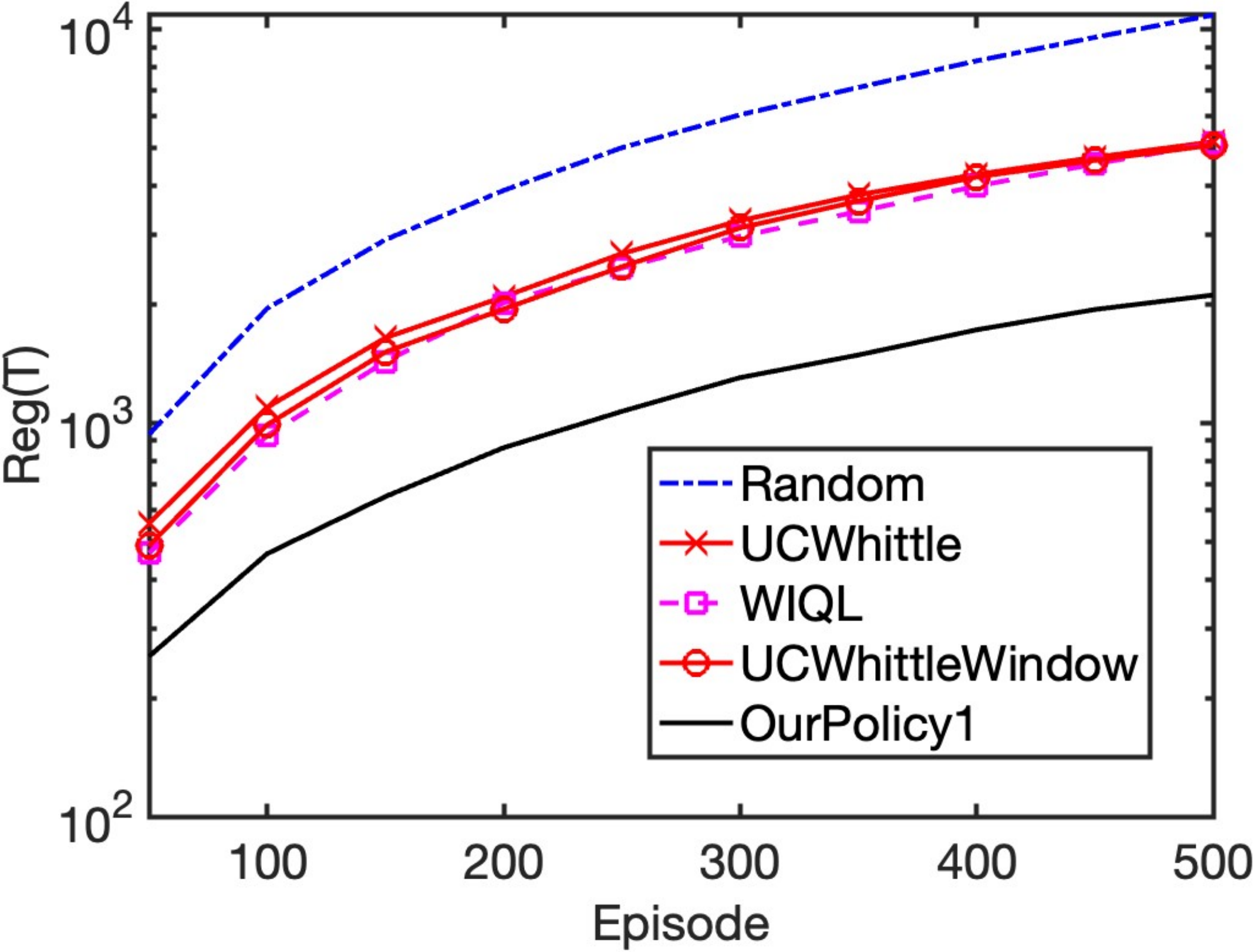}
  \subcaption{Scheduling. (Real Data).}
\end{subfigure}
%
\hspace{1mm}
\begin{subfigure}[t]{0.30\textwidth}
\includegraphics[width=\textwidth]{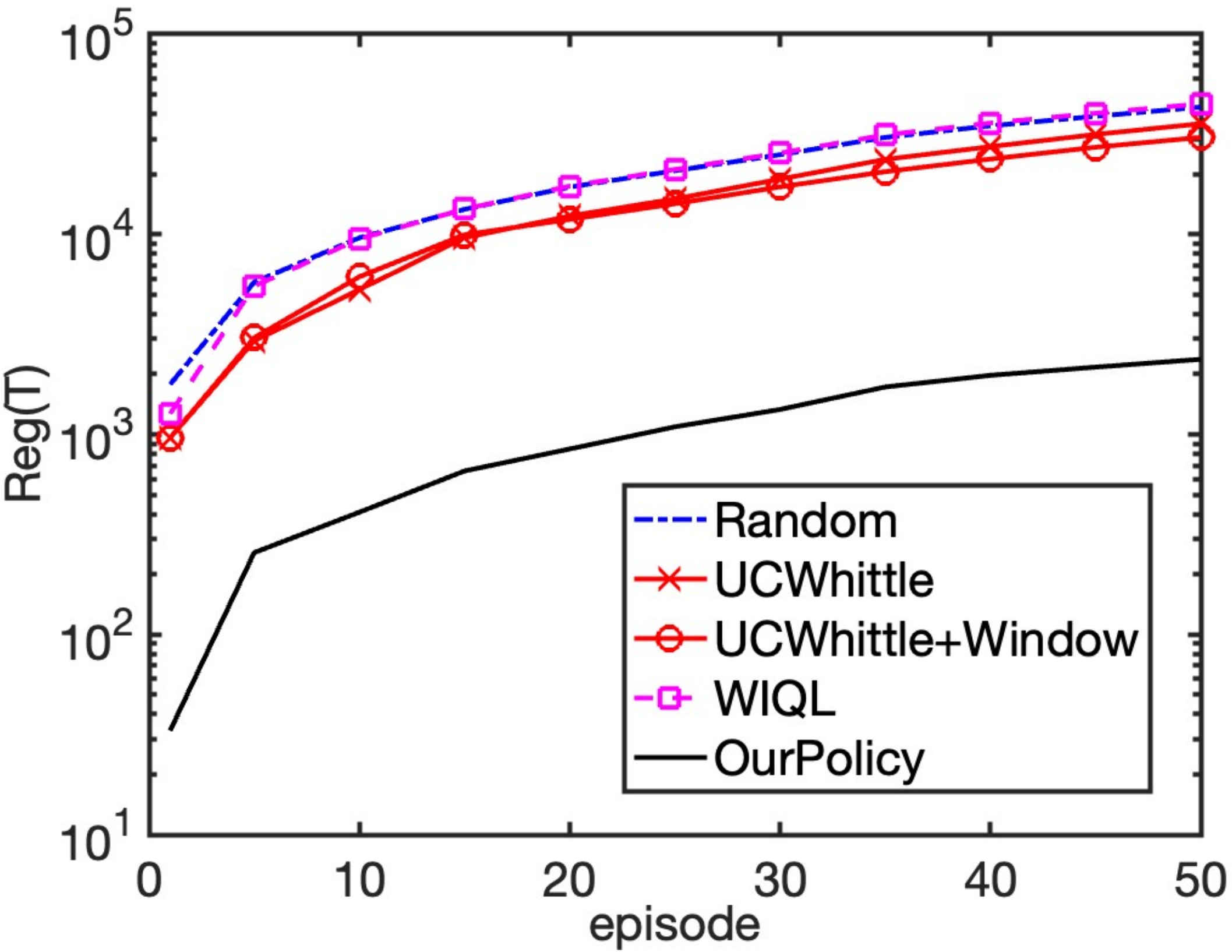}
\subcaption{\small 1-D Bandit.}
\end{subfigure}
\vspace{-0.05in}
\caption{\small $\mathrm{Reg(T)}$ Vs. number of episodes in Scheduling and 1-D Bandit.\label{fig:result}}
\vspace{-0.1in}
\end{figure*}

In this section, we demonstrate the performance of our proposed policy by evaluating it under two applications (wireless scheduling and one-dimensional bandit) modeled as RMAB. 

\subsection{Wireless Scheduling Model}
For wireless scheduling, we follow the system model discussed in Section \ref{wirelessscheduling}. Our goal is to optimize the objective defined in \eqref{objectivewireless}, where we model the reward for accurate timely estimation as the mutual information between the estimated signal and the actual signal. \cite{SunNonlinear2019} showed that mutual information can be determined using a decreasing function $- (a\mathrm{log}_2(1-\sigma_n^{2\Delta_{n, h, t}}))/2$ of the Age of Information (AoI) $\Delta_{n, h, t}$ for zero-mean i.i.d. Gaussian random variables with variance $\sigma^2_n$, where AoI $s_{n, h, t}$ of the source $n$ is the time difference between the current time $h$ and the generation time of the most recently delivered signal. The transmission success probability $q_n(t)$ of each user $n$ is generated both synthetically and using real world dataset.  

\subsubsection{Wireless Scheduling with Synthetic Success Probability}
In our experiment with synthetic success probability, we assume that $q_n(t)$ is unknown and non-stationary for half of the sources, whereas it is is unknown but stationary for the remaining half. For non-stationary arms, the probability of successful transmission $q_n(t)$ changes to $\min\{q_n(t-1)+\frac{V_n}{2T}, 1\}$ with probability $0.6$, or it changes to $\max\{q_n(t-1)-\frac{V_n}{2T}, 0\}$ with probability 0.4; the initial value of $q_n(t)=0.1$ is used. For the other half, $q_n(t)=1$ is unknown but stationary. In this experiment, we consider $a=1$ for reward function, the variance of $\sigma^2_n=0.9$ and $\sigma^2_n=0.5$ are used, respectively for non-stationary and stationary users. 

\subsubsection{Wireless Scheduling with Real-world Success Probability}
To get real-world success probability, we have also incorporated a recent dataset from \cite{reddy2025wioc} for the wireless scheduling problem. The dataset contains traces of measured signal strength for six users across indoor and outdoor settings, leading to non-stationary behavior. The signal strength time-series values allow us to calculate packet transmission success probabilities, which we then utilize to setup our wireless scheduling problem. In Figure \ref{fig:successprob}, we plot these success probability values, which clearly demonstrate the highly time-varying nature of the dataset due to user mobility. We also plot variation $\epsilon_n(t)=|q_n(t)-q_n(t-1)|$ in success probability in Figure \ref{fig:variation}. In this simulation, we consider six users with AoI function $-a\mathrm{log2}(1-0.9^{\Delta_{n,h, t}})$, where we set $a=0.4$ for three users, $a=0.5$ for one user, $a=0.9$ for the other two users. 

\subsection{One-dimensional Monotonic Bandit}
Now, we discuss how we model the one-dimensional monotonic bandit application. We consider a modified version of the one dimensional RMAB problem studied in \cite{killian2021q, nakhleh2022deeptop}. Each arm $n$ is a Markov process with $K$ states, numbered as $0, 1, \ldots, K-1$. For our simulations, we set $K=10$. The reward of an arm increases linearly with the current state, i.e. $r(s, a)=s$. If the arm is activated, then it can evolve from state $s$ to $\min\{s+1, K-1\}$ with probability $q_n(t)$ or remain in the same state $s$ with probability $1-q_{n}(t)$. If the arm is not activated, then it evolves from state $s$ to $\max\{s-1, 0\}$ with probability $p_n(t)$ or remain in the same state $s$ with probability $1-p_n(t)$. One-dimensional MDPs of this form are often used in health monitoring and machine monitoring applications \cite{matsena2021review, parisi2024monitored}. In our simulation, we consider - (i) $V_n=35$ over $T=50$ episodes, (ii) $p_n(t)$ changes to $\min\{p_n(t-1)+\frac{V_n}{4T}, 1\}$ with probability $0.5$, or it changes to $\max\{p_n(t-1)-\frac{V_n}{4T}, 0\}$ with probability 0.5 and (iii) $q_n(t)$ changes to $\min\{q_n(t-1)+\frac{V_n}{4T}, 1\}$ with probability $0.5$, or it changes to $\max\{q_n(t-1)-\frac{V_n}{4T}, 0\}$ with probability $0.5$.

\subsection{Performance Evaluation}
In each application, we consider that there are $N$ arms and a policy can activate $M$ of them in each time slot $h\in[H]$ of every episode $t\in [T]$. We evaluate our policy against the UCWhittle policy \cite{wang2023optimistic}, UCWhittle + Window policy, where we incorporate sliding window to UCWhittle and the window size is taken randomly, the WIQL policy \cite{biswas2021learn}, and a randomized policy \cite{Kadota2018}. The results 
are averaged over $50$ independent runs. In simulation results of Table \ref{tab:performance} and Figure \ref{fig:result}, a discount factor of $\gamma=0.99$ was considered. Time-slots $H=50$ and $T=50$ episodes were considered for wireless scheduling (synthetic) and 1-D bandit. Time-slots $H=500$ and $T=500$ episodes were considered for wireless scheduling (Real).   Moreover, we considered $V_n=V$ for all $n$ in the wireless scheduling (synthetic) and 1-D bandit problems. But, in wireless scheduling (Real), $V_n$ can vary across $n$ and depends on the dataset. In Figure \ref{fig:result}, we used $N=20, M=4$ for wireless scheduling (synthetic) and 1-D bandit. For wireless scheduling (Real), we consider $N=6$ and $M=1$. 

Table \ref{tab:performance} presents a comparative analysis of the cumulative regret, denoted as $\mathrm{Reg(T)}$, across three distinct experimental environments: a one-dimensional monotonic bandit, synthetic wireless scheduling, and real-world wireless scheduling. The performance of the proposed policy is evaluated against four baselines—UCWhittle, UCWhittle+Win, Random, and WIQL—across various system scales $(N,M)$. Across all applications and parameter configurations, the proposed policy consistently achieves the lowest cumulative regret. While UCWhittle+Win generally improves upon the standard UCWhittle by incorporating a sliding window for non-stationary environments, both are substantially outperformed by our method. This suggests that simply implementing a windowed version of UCWhittle is insufficient to guarantee optimal performance; rather, the window mechanism must be carefully designed to effectively capture the system's underlying dynamics. Finally, the Random and WIQL policies consistently exhibit the highest regret, with their performance gaps widening significantly as the number of resources M increases.

Figure \ref{fig:result} illustrates the cumulative regret, $\mathrm{Reg(T)}$, as a function of the number of episodes across the three experimental setups: (a) Synthetic Scheduling, (b) Real-Data Scheduling, and (c) 1-D Bandit. The plots are presented on a semi-logarithmic scale. Across all three applications, our Policy consistently achieves the lowest cumulative regret. UCWhittle and UCWhittle+Window algorithms are very close to each other, with the windowed version showing slightly better performance. However, both exhibit a significantly steeper regret slope compared to our proposed method. 


\section{Conclusions and Future Work}\label{conclusion}
This paper introduced an online/reinforcement learning algorithm for estimating the Whittle index for restless bandit problems with unknown and non-stationary transition kernels using sliding window and upper confidence bound approaches. To our knowledge, this is the first work to provide an upper bound of the dynamic regret of an online Whittle index-based algorithm for RMABs with unknown and non-stationary transition kernels. Our proposed algorithm is evaluated on two different restless bandit problems against four baselines and provides significant performance gains. We also provide novel regret analysis. An interesting direction of future work involves proving lower bounds for regret. Other future directions include extending this work to infinite or continuous state spaces, and designing algorithms that achieve sub-linear dynamic regret even for large $V_n$ (rapidly varying kernels).


\bibliographystyle{unsrt}
\bibliography{refshisher}

\begin{thebibliography}{10}

\bibitem{whittle1988restless}
Peter Whittle.
\newblock Restless bandits: Activity allocation in a changing world.
\newblock {\em Journal of applied probability}, 25(A):287--298, 1988.

\bibitem{borkar2017opportunistic}
Vivek~S Borkar, Gaurav~S Kasbekar, Sarath Pattathil, and Priyesh~Y Shetty.
\newblock Opportunistic scheduling as restless bandits.
\newblock {\em IEEE Transactions on Control of Network Systems},
  5(4):1952--1961, 2017.

\bibitem{Kadota2018}
Igor Kadota, Abhishek Sinha, Elif Uysal-Biyikoglu, Rahul Singh, and Eytan
  Modiano.
\newblock Scheduling policies for minimizing age of information in broadcast
  wireless networks.
\newblock {\em IEEE/ACM Transactions on Networking}, 26(6):2637--2650, 2018.

\bibitem{Tripathi2019}
Vishrant Tripathi and Eytan Modiano.
\newblock A whittle index approach to minimizing functions of age of
  information.
\newblock {\em IEEE/ACM Transactions on Networking}, 2024.

\bibitem{shishertimely}
Md~Kamran~Chowdhury Shisher, Yin Sun, and I-Hong Hou.
\newblock Timely communications for remote inference.
\newblock {\em IEEE/ACM Transactions on Networking}, 32(5):3824--3839, 2024.

\bibitem{Kadota2019}
Igor Kadota, Abhishek Sinha, and Eytan Modiano.
\newblock Scheduling algorithms for optimizing age of information in wireless
  networks with throughput constraints.
\newblock {\em IEEE/ACM Transactions on Networking}, 27(4):1359--1372, 2019.

\bibitem{liu2011indexability}
Keqin Liu, Richard Weber, and Qing Zhao.
\newblock Indexability and whittle index for restless bandit problems involving
  reset processes.
\newblock In {\em 2011 50th IEEE Conference on Decision and Control and
  European Control Conference}, pages 7690--7696. IEEE, 2011.

\bibitem{ruiz2020multi}
Diego Ruiz-Hern{\'a}ndez, Jes{\'u}s~M Pinar-P{\'e}rez, and David
  Delgado-G{\'o}mez.
\newblock Multi-machine preventive maintenance scheduling with imperfect
  interventions: A restless bandit approach.
\newblock {\em Computers \& Operations Research}, 119:104927, 2020.

\bibitem{dahiya2022scalable}
Abhinav Dahiya, Nima Akbarzadeh, Aditya Mahajan, and Stephen~L Smith.
\newblock Scalable operator allocation for multirobot assistance: A restless
  bandit approach.
\newblock {\em IEEE Transactions on Control of Network Systems},
  9(3):1397--1408, 2022.

\bibitem{dusonchet2003continuous}
Fabrice Dusonchet and M-O Hongler.
\newblock Continuous-time restless bandit and dynamic scheduling for
  make-to-stock production.
\newblock {\em IEEE Transactions on Robotics and Automation}, 19(6):977--990,
  2003.

\bibitem{meshram2017hidden}
Rahul Meshram, Aditya Gopalan, and D~Manjunath.
\newblock A hidden markov restless multi-armed bandit model for playout
  recommendation systems.
\newblock In {\em Communication Systems and Networks: 9th International
  Conference, COMSNETS 2017, Bengaluru, India, January 4--8, 2017, Revised
  Selected Papers and Invited Papers 9}, pages 335--362. Springer, 2017.

\bibitem{meshram2018whittle}
Rahul Meshram, D~Manjunath, and Aditya Gopalan.
\newblock On the whittle index for restless multiarmed hidden markov bandits.
\newblock {\em IEEE Transactions on Automatic Control}, 63(9):3046--3053, 2018.

\bibitem{villar2015multi}
Sof{\'\i}a~S Villar, Jack Bowden, and James Wason.
\newblock Multi-armed bandit models for the optimal design of clinical trials:
  benefits and challenges.
\newblock {\em Statistical science: a review journal of the Institute of
  Mathematical Statistics}, 30(2):199, 2015.

\bibitem{bhattacharya2018restless}
Biswarup Bhattacharya.
\newblock Restless bandits visiting villages: A preliminary study on
  distributing public health services.
\newblock In {\em Proceedings of the 1st ACM SIGCAS Conference on Computing and
  Sustainable Societies}, pages 1--8, 2018.

\bibitem{lee2019optimal}
Elliot Lee, Mariel~S Lavieri, and Michael Volk.
\newblock Optimal screening for hepatocellular carcinoma: A restless bandit
  model.
\newblock {\em Manufacturing \& Service Operations Management}, 21(1):198--212,
  2019.

\bibitem{mate2020collapsing}
Aditya Mate, Jackson Killian, Haifeng Xu, Andrew Perrault, and Milind Tambe.
\newblock Collapsing bandits and their application to public health
  intervention.
\newblock {\em Advances in Neural Information Processing Systems},
  33:15639--15650, 2020.

\bibitem{behari2024decision}
Nikhil Behari, Edwin Zhang, Yunfan Zhao, Aparna Taneja, Dheeraj Nagaraj, and
  Milind Tambe.
\newblock A decision-language model (dlm) for dynamic restless multi-armed
  bandit tasks in public health.
\newblock {\em arXiv preprint arXiv:2402.14807}, 2024.

\bibitem{weber1990index}
Richard~R Weber and Gideon Weiss.
\newblock On an index policy for restless bandits.
\newblock {\em Journal of applied probability}, 27(3):637--648, 1990.

\bibitem{verloop2016asymptotically}
Ina~Maria Verloop.
\newblock Asymptotically optimal priority policies for indexable and
  nonindexable restless bandits.
\newblock {\em The Annals of Applied Probability}, 26(4):1947–1995, 2016.

\bibitem{gast:hal-03041176}
Nicolas Gast, Bruno Gaujal, and Chen Yan.
\newblock {Exponential Asymptotic Optimality of Whittle Index Policy.}
\newblock {\em {Queueing Systems}}, 104:1--44, June 2023.

\bibitem{gast2021lp}
Nicolas Gast, Bruno Gaujal, and Chen Yan.
\newblock {LP}-based policies for restless bandits: necessary and sufficient
  conditions for (exponentially fast) asymptotic optimality.
\newblock {\em arXiv:2106.10067}, 2021.

\bibitem{NIPS2015_6d70cb65}
Christopher~R Dance and Tomi Silander.
\newblock When are kalman-filter restless bandits indexable?
\newblock In C.~Cortes, N.~Lawrence, D.~Lee, M.~Sugiyama, and R.~Garnett,
  editors, {\em Advances in Neural Information Processing Systems}, volume~28.
  Curran Associates, Inc., 2015.

\bibitem{le2008multi}
Jerome Le~Ny, Munther Dahleh, and Eric Feron.
\newblock Multi-uav dynamic routing with partial observations using restless
  bandit allocation indices.
\newblock In {\em 2008 American Control Conference}, pages 4220--4225. IEEE,
  2008.

\bibitem{ortner2020variational}
Ronald Ortner, Pratik Gajane, and Peter Auer.
\newblock Variational regret bounds for reinforcement learning.
\newblock In {\em Uncertainty in Artificial Intelligence}, pages 81--90. PMLR,
  2020.

\bibitem{cheung2020reinforcement}
Wang~Chi Cheung, David Simchi-Levi, and Ruihao Zhu.
\newblock Reinforcement learning for non-stationary markov decision processes:
  The blessing of (more) optimism.
\newblock In {\em International conference on machine learning}, pages
  1843--1854. PMLR, 2020.

\bibitem{marin2024metacurl}
Bianca Marin~Moreno, Margaux Br{\'e}g{\`e}re, Pierre Gaillard, and Nadia
  Oudjane.
\newblock Metacurl: Non-stationary concave utility reinforcement learning.
\newblock {\em Advances in Neural Information Processing Systems},
  37:123091--123126, 2024.

\bibitem{wei2023provably}
Honghao Wei, Arnob Ghosh, Ness Shroff, Lei Ying, and Xingyu Zhou.
\newblock Provably efficient model-free algorithms for non-stationary cmdps.
\newblock In {\em International Conference on Artificial Intelligence and
  Statistics}, pages 6527--6570. PMLR, 2023.

\bibitem{wei2021non}
Chen-Yu Wei and Haipeng Luo.
\newblock Non-stationary reinforcement learning without prior knowledge: An
  optimal black-box approach.
\newblock In {\em Conference on learning theory}, pages 4300--4354. PMLR, 2021.

\bibitem{papadimitriou1994complexity}
Christos~H Papadimitriou and John~N Tsitsiklis.
\newblock The complexity of optimal queueing network control.
\newblock In {\em Proceedings of IEEE 9th Annual Conference on Structure in
  Complexity Theory}, pages 318--322, 1994.

\bibitem{wang2023optimistic}
Kai Wang, Lily Xu, Aparna Taneja, and Milind Tambe.
\newblock Optimistic whittle index policy: Online learning for restless
  bandits.
\newblock In {\em Proceedings of the AAAI Conference on Artificial
  Intelligence}, volume~37, pages 10131--10139, 2023.

\bibitem{kaul2012real}
Sanjit Kaul, Roy Yates, and Marco Gruteser.
\newblock Real-time status: How often should one update?
\newblock In {\em IEEE INFOCOM}, pages 2731--2735, 2012.

\bibitem{YinUpdateInfocom}
Yin Sun, Elif Uysal-Biyikoglu, Roy Yates, C.~Emre Koksal, and Ness~B. Shroff.
\newblock Update or wait: How to keep your data fresh.
\newblock In {\em IEEE INFOCOM}, pages 1--9, 2016.

\bibitem{biswas2021learn}
Arpita Biswas, Gaurav Aggarwal, Pradeep Varakantham, and Milind Tambe.
\newblock Learn to intervene: An adaptive learning policy for restless bandits
  in application to preventive healthcare.
\newblock {\em arXiv preprint arXiv:2105.07965}, 2021.

\bibitem{ornee2023whittle}
Tasmeen~Zaman Ornee and Yin Sun.
\newblock A {Whittle} index policy for the remote estimation of multiple
  continuous {G}auss-{M}arkov processes over parallel channels.
\newblock {\em ACM MobiHoc}, 2023.

\bibitem{avrachenkov2022whittle}
Konstantin~E Avrachenkov and Vivek~S Borkar.
\newblock Whittle index based q-learning for restless bandits with average
  reward.
\newblock {\em Automatica}, 139:110186, 2022.

\bibitem{fu2019towards}
Jing Fu, Yoni Nazarathy, Sarat Moka, and Peter~G Taylor.
\newblock Towards q-learning the whittle index for restless bandits.
\newblock In {\em 2019 Australian \& New Zealand Control Conference (ANZCC)},
  pages 249--254. IEEE, 2019.

\bibitem{nakhleh2021neurwin}
Khaled Nakhleh, Santosh Ganji, Ping-Chun Hsieh, I~Hou, Srinivas Shakkottai,
  et~al.
\newblock Neurwin: Neural whittle index network for restless bandits via deep
  rl.
\newblock {\em Advances in Neural Information Processing Systems}, 34:828--839,
  2021.

\bibitem{nakhleh2022deeptop}
Khaled Nakhleh, I~Hou, et~al.
\newblock Deeptop: Deep threshold-optimal policy for mdps and rmabs.
\newblock {\em Advances in Neural Information Processing Systems},
  35:28734--28746, 2022.

\bibitem{tripathi2021online}
Vishrant Tripathi and Eytan Modiano.
\newblock An online learning approach to optimizing time-varying costs of aoi.
\newblock In {\em Proceedings of the Twenty-second International Symposium on
  Theory, Algorithmic Foundations, and Protocol Design for Mobile Networks and
  Mobile Computing}, pages 241--250, 2021.

\bibitem{xiong2022reinforcement}
Guojun Xiong, Jian Li, and Rahul Singh.
\newblock Reinforcement learning augmented asymptotically optimal index policy
  for finite-horizon restless bandits.
\newblock In {\em Proceedings of the AAAI Conference on Artificial
  Intelligence}, volume~36, pages 8726--8734, 2022.

\bibitem{wang2020restless}
Siwei Wang, Longbo Huang, and John Lui.
\newblock Restless-ucb, an efficient and low-complexity algorithm for online
  restless bandits.
\newblock {\em Advances in Neural Information Processing Systems},
  33:11878--11889, 2020.

\bibitem{dai2011non}
Wenhan Dai, Yi~Gai, Bhaskar Krishnamachari, and Qing Zhao.
\newblock The non-bayesian restless multi-armed bandit: A case of
  near-logarithmic regret.
\newblock In {\em 2011 IEEE international conference on acoustics, speech and
  signal processing (ICASSP)}, pages 2940--2943. IEEE, 2011.

\bibitem{jung2019regret}
Young~Hun Jung and Ambuj Tewari.
\newblock Regret bounds for thompson sampling in episodic restless bandit
  problems.
\newblock {\em Advances in Neural Information Processing Systems}, 32, 2019.

\bibitem{shisher2025Technical}
Md~Kamran~Chowdhury Shisher, Vishrant Tripathi, Mung Chiang, and Christopher~G
  Brinton.
\newblock Online learning of whittle indices for restless bandits with
  non-stationary transition kernels.
\newblock {\em arXiv preprint arXiv:2506.18186}, 2025.

\bibitem{auer2008near}
Peter Auer, Thomas Jaksch, and Ronald Ortner.
\newblock Near-optimal regret bounds for reinforcement learning.
\newblock {\em Advances in neural information processing systems}, 21, 2008.

\bibitem{cheung2022hedging}
Wang~Chi Cheung, David Simchi-Levi, and Ruihao Zhu.
\newblock Hedging the drift: Learning to optimize under nonstationarity.
\newblock {\em Management Science}, 68(3):1696--1713, 2022.

\bibitem{auer2002nonstochastic}
Peter Auer, Nicolo Cesa-Bianchi, Yoav Freund, and Robert~E Schapire.
\newblock The nonstochastic multiarmed bandit problem.
\newblock {\em SIAM journal on computing}, 32(1):48--77, 2002.

\bibitem{SunNonlinear2019}
Yin Sun and Benjamin Cyr.
\newblock Sampling for data freshness optimization: Non-linear age functions.
\newblock {\em J. Commun. Netw.}, 21(3):204--219, 2019.

\bibitem{reddy2025wioc}
Gautham Reddy, Simran Singh, Ismail Guvenc, Mark Poletti, and Ruoyu Sun.
\newblock Wioc: Wireless indoor-outdoor classification using wifi and cellular
  signals.
\newblock In {\em 2025 IEEE International Conference on Communications
  Workshops (ICC Workshops)}, pages 1954--1959. IEEE, 2025.

\bibitem{killian2021q}
Jackson~A Killian, Arpita Biswas, Sanket Shah, and Milind Tambe.
\newblock Q-learning lagrange policies for multi-action restless bandits.
\newblock In {\em Proceedings of the 27th ACM SIGKDD Conference on Knowledge
  Discovery \& Data Mining}, pages 871--881, 2021.

\bibitem{matsena2021review}
Zvifadzo Matsena~Zingoni, Tobias~F Chirwa, Jim Todd, and Eustasius Musenge.
\newblock A review of multistate modelling approaches in monitoring disease
  progression: Bayesian estimation using the kolmogorov-chapman forward
  equations.
\newblock {\em Statistical methods in medical research}, 30(5):1373--1392,
  2021.

\bibitem{parisi2024monitored}
Simone Parisi, Montaser Mohammedalamen, Alireza Kazemipour, Matthew~E Taylor,
  and Michael Bowling.
\newblock Monitored markov decision processes.
\newblock {\em arXiv preprint arXiv:2402.06819}, 2024.

\bibitem{weissman2003inequalities}
Tsachy Weissman, Erik Ordentlich, Gadiel Seroussi, Sergio Verdu, and Marcelo~J
  Weinberger.
\newblock Inequalities for the l1 deviation of the empirical distribution.
\newblock {\em Hewlett-Packard Labs, Tech. Rep}, page 125, 2003.

\bibitem{chung2012chernoff}
Kai-Min Chung, Henry Lam, Zhenming Liu, and Michael Mitzenmacher.
\newblock Chernoff-hoeffding bounds for markov chains: Generalized and
  simplified.
\newblock {\em arXiv preprint arXiv:1201.0559}, 2012.

\end{thebibliography}

\section{Appendix}

\subsection{Proof of Lemma 1}\label{plemma1}

The L1-deviation of the true distribution and the empirical distribution of $m$ events is bounded by \cite{weissman2003inequalities}:
\begin{align}\label{L1inequality}
    \mathrm{Pr}(|\hat p-p|_1 \geq \beta)\leq (2^m-2)\exp^{(-\frac{k \beta^2}{2})},
\end{align}
where $k$ is the number of samples.

    We denote $\mathbf 1(s',s,a,n,t)$ as an indicator variable that represents the event of state $s$, action $a$, and next state $s'$ for arm $n$ at one time slot of episode $t$. Similarly, $\mathbf 1(s',s,a,n,t,w)$ is an indicator variable that represents the event of state $s$, action $a$, and next state $s'$ for arm $n$ at one time slot in any one of the episodes $t-w+1, t-(w-1)+1, \ldots, t-1$. 

    By using \eqref{L1inequality} with $$\beta=\sqrt{\frac{2|\mathcal S|\mathrm{log}(2|\mathcal S||\mathcal A|NT/\eta)}{C^{(n)}_{t, w}(s,a)}}$$ and 
    $$k=C^{(n)}_{t, w}(s,a),$$ we get 
      \begin{align}\label{lemma2eq1}
        &\mathrm{Pr}\left(\|\hat P_{n, t, w}(\cdot|s, a)-\mathbb E[\mathbf 1(\cdot,s,a,n,t, w)]\|_1 \geq \sqrt{\frac{2|\mathcal S|\mathrm{log}(2|\mathcal S||\mathcal A|NT/\eta)}{C^{(n)}_{t, w}(s,a)}}\right)\nonumber\\&~~~~~~~~~~~~~~~~~~~~~~~~~~~~~~~~~~~~~~~~~~~~~~~~~~~~~~~~~~~~~~~~~~~~~~~~~~~~~~~~~~~~~~~~~~~~\leq \frac{\eta}{N|\mathcal S||\mathcal A|T}.
    \end{align}
    
    With probability one,  we have
    \begin{align}\label{lemma2eq2}
        &\|P_{n,t}(\cdot|s,a)-\mathbb E[\mathbf 1(\cdot,s,a,n,t, w)]\|_1\nonumber\\
        \leq &\|P_{n,t}(\cdot|s,a)-\max_{t'\in\{t-w+1, t-(w-1)+1, \ldots, t\}}\mathbb E[\mathbf 1(\cdot,s,a,n,t', 1)]\|_1\nonumber\\
        =&\|P_{n,t}(\cdot|s,a)-\max_{t'\in\{t-w+1, t-(w-1)+1, \ldots, t\}}P_{n,t'}(\cdot|s,a)]\|_1\leq w_nV_n/T.
    \end{align}

Now, by combining \eqref{lemma2eq1} and \eqref{lemma2eq2}, we have 
\begin{align}
    \mathrm{Pr}( P_{n,t} \in B_{n,t}, \forall n, \forall t) &\geq 1-\sum_{t=1}^{T}\sum_{n=1}^N\sum_{(s, a)\in \mathcal S \times A}\frac{\eta}{N |\mathcal S||\mathcal A|T}\nonumber\\
    &=1-\eta.
\end{align}
This concludes the proof of Lemma \ref{lemma1}.

\subsection{Proof of Theorem \ref{theorem1}}\label{Ptheorem1}

We focus on bounding the dynamic regret when the confidence bound  holds.
\begin{align}
    \mathrm{Reg}^{(t)}
    &=\sum_{t=1}^T\mathcal L(\pi^*_t, (P_{n, t})_{n=1}^N, \lambda^*_t)-\mathcal L(\pi_t, (P_{n, t})_{n=1}^N, \lambda_t)\nonumber\\
    &\leq \sum_{t=1}^T\mathcal L(\pi^*_t, (P_{n, t})_{n=1}^N, \lambda_t)-\mathcal L(\pi_t, (P_{n, t})_{n=1}^N, \lambda_t)\nonumber\\
    &=\sum_{t=1}^T\sum_{n=1}^NU(\pi_{n, t}^*, P_{n,t}, \lambda_t)-U(\pi_{n, t}, P_{n,t}, \lambda_t)\nonumber\\
    &\leq \sum_{t=1}^T\sum_{n=1}^NU(\pi_{n, t}, \tilde P_{n,t}, \lambda_t)-U(\pi_{n, t}, P_{n,t}, \lambda_t), 
 \end{align}    
where the first inequality holds because $\lambda^*_t$ minimizes $\mathcal L(\pi^*_t, (P_{n, t})_{n=1}^N, \lambda)$ for all $
\lambda\geq 0$ and the second inequality holds because of \eqref{optimistic1}. Then, by applying value difference theorem \cite[Theorem 6.4] {wang2023optimistic}, we have 

\begin{align}
    &U(\pi_{n, t}, \tilde P_{n,t}, \lambda_t)-U(\pi_{n, t}, P_{n,t}, \lambda_t)\nonumber\\
    &\!\leq \mathbb E_{P_{n, t}, \pi_{n, t}} \bigg[ \sum_{(s,a)\in \mathcal S} \alpha_{t}^{(n)}(s, a) \sum_{s'\in \mathcal S}\bigg|\tilde P_{n, t}(s'|s, a)-P_{n, t}(s'|s, a)\bigg|\bigg]V_{max}\nonumber\\
    &=\mathbb E_{P_{n, t}, \pi_{n, t}} \bigg[ \sum_{(s,a)\in \mathcal S} \alpha_{t}^{(n)}(s, a) \sum_{s'\in \mathcal S}d_{t}^{(n)}(s,a)\bigg]V_{max}
\end{align}
where $V_{max}=\max_{n \in [N], s \in \mathcal S} V_n(s'; \pi_{n,t}, P_{n,t})$. After this, by substituting the value of $d_{t}^{(n)}(s,a)$, we get 
\begin{align}
    &\sum_{t=1}^T \sum_{n=1}^N \mathbb E_{P_{n, t}, \pi_{n, t}}\bigg[ \sum_{(s,a)\in \mathcal S \times \mathcal A} \alpha_{t}^{(n)}(s, a) d_{t}^{(n)}(s, a)\bigg]\nonumber\\
    &\leq \sum_{t=1}^T\bigg( \sum_{n=1}^N \sqrt{2|\mathcal S|\mathrm{log}(2|\mathcal S||\mathcal A|NT/\eta)}  \mathbb E_{P_{n, t}, \pi_{n, t}}\bigg[ \sum_{(s, a) \in \mathcal Z_2} \frac{\alpha_{t}^{(n)}(s, a)}{\sqrt{C^{(n)}_{t,w_n}(s, a)}}\bigg]\nonumber\\&+w_nV_n/T H\bigg)\nonumber\\
    &=\sum_{t=1}^T\bigg(\sum_{n=1}^N\sqrt{2|\mathcal S|\mathrm{log}(2|\mathcal S|| \mathcal A|NT/\eta)}   \sum_{(s, a) \in \mathcal S \times \mathcal A}g_{t,n}(s,a,w_n)+w_nV_n H/T\bigg)\nonumber\\
    &\leq \sum_{t=1}^T\bigg(\sum_{n=1}^N \sqrt{2|\mathcal S|\mathrm{log}(2|\mathcal S||\mathcal A|NT/\eta)} 2|\mathcal S|G_{t,n}(w_n)+w_nV_nH/T\bigg)\nonumber\\
    &=\sum_{t=1}^T O\bigg(\sum_{n=1}^N 2|\mathcal S| G_{t,n}(w_n)+w_nV_nH/T\bigg),
\end{align}
where $G_{t,n}(w)=\max_{(s,a)\in \mathcal S\times \mathcal A} g_{t,n}(s,a,w)$ and $$g_{t,n}(s,a,w)=\mathbb E_{P_{n, t}, \pi_{n, t}}\bigg[ \sum_{(s, a) \in \mathcal Z_2} \frac{\alpha_{t}^{(n)}(s, a)}{\sqrt{C^{(n)}_{t,w}(s, a)}}\bigg]$$ is a non-increasing function of the window size $w$. This is because $C^{(n)}_{t,w}(s, a)$ is a non-decreasing function of $w$.

\subsection{Proof of Theorem \ref{theorem2}}\label{Ptheorem2}

Lets denote the probability to visit every $(s,a) \in \mathcal S\times \mathcal A$ at least once in an episode for all arms $n\in [N]$ by $P_{\mathrm{min}}$. According to the condition in Theorem \ref{theorem2}, $P_{\mathrm{min}}>0$.

Now, to prove Theorem \ref{theorem2}, we bound 
\begin{align}
   g_{t,n}(s, a, w)&= \mathbb E_{P_{n, t}, \pi_{n, t}}\left[ \frac{\alpha_{t}^{(n)}(s, a)}{\sqrt{C^{(n)}_{t,w}(s, a)}}\right] \leq H \mathbb E_{P_{n, t}, \pi_{n, t}}\left[ \frac{1}{\sqrt{C^{(n)}_{t,w}(s, a)}}\right],
\end{align}
where the number of visit $\alpha_{t}^{(n)}(s, a)$ in one episode is upper bounded by the time horizon $H$.

Let $\mathbb E[C^{(n)}_{t,w}(s, a)]=\mu$. Then, $\mu\geq 1$ because by definition, $C^{(n)}_{t, w}(s, a):=\max\bigg\{\sum_{s^{\prime} \in \mathcal S} C_{t, w}^{(n)}(s', a, s), 1\bigg\}.$ Moreover, $\mu\geq w_n P_{\mathrm{min}}$.

Now, we have 
\begin{align}
    \mathbb E\left[ \frac{1}{\sqrt{C^{(n)}_{t,w}(s, a)}}\right] &= \mathbb E\left[ \frac{1}{\sqrt{C^{(n)}_{t,w}(s, a)}} \bigg| C^{(n)}_{t,w}(s, a) < \frac{\mu}{2}\right] P\left(C^{(n)}_{t,w}(s, a) < \frac{\mu}{2}\right) \nonumber\\ &+ \mathbb E\left[ \frac{1}{\sqrt{C^{(n)}_{t,w}(s, a)}} \bigg| C^{(n)}_{t,w}(s, a) \geq \frac{\mu}{2}\right] P\left(C^{(n)}_{t,w}(s, a) \geq \frac{\mu}{2}\right)
\end{align}

If $C^{(n)}_{t,w}(s, a) \geq \mu/2$, then $1/\sqrt{C^{(n)}_{t,w}(s, a)} \leq 1/\sqrt{\mu/2} = \sqrt{2/\mu}.$
This part of the expectation is therefore bounded by $\sqrt{2/\mu} \cdot P(C^{(n)}_{t,w}(s, a) \geq \mu/2) \leq \sqrt{2/\mu} \leq \sqrt{\frac{2}{w_n \mathrm p_{min}}}.$

If $C^{(n)}_{t,w}(s, a) < \mu/2$, there exists a constant $\eta>0$ such that we have $P(C^{(n)}_{t,w}(s, a) < (1 - 1/2)\mu) \leq O(e^{-\mu/4\eta})\leq O(e^{-w_n P_{\mathrm{min}}/4\eta})$ by using the Chernoff bound for Markov Chains \cite{chung2012chernoff}. 

Thus, the expectation becomes
\begin{align}
    \mathbb E\left[ \frac{1}{\sqrt{C^{(n)}_{t,w}(s, a)}}\right] \leq  \frac{\sqrt2}{\sqrt{w_n P_{\mathrm{min}}}}+O(e^{-w_n P_{\mathrm{min}}/4\eta}).
\end{align}
We can have a constant $\eta_1>0$ independent of $w_n$ and $P_{\mathrm{min}}$ such that 
\begin{align}
   \frac{\sqrt2}{\sqrt{w_n P_{\mathrm{min}}}}+e^{-w_n P_{\mathrm{min}}/4\eta} \leq \frac{\eta_1}{\sqrt{w_n P_{\mathrm{min}}}}=O(1/\sqrt{w_n P_{\mathrm{min}}}).
\end{align}

Therefore, we have 
\begin{align}
   g_{t,n}(s, a, w)&= \mathbb E_{P_{n, t}, \pi_{n, t}}\left[ \frac{\alpha_{t}^{(n)}(s, a)}{\sqrt{C^{(n)}_{t,w}(s, a)}}\right] \leq O\bigg(\frac{H}{\sqrt{w_n P_{\mathrm{min}}}}\bigg)
\end{align}

Next, we have 
\begin{align}
    &\sum_{t=1}^T \bigg(\frac{H}{\sqrt{w_n P_{\mathrm{min}}}}+ \frac{w_n V_nH}{T}\bigg)=H\bigg(\frac{T}{\sqrt{w_nP_{\mathrm{min}}}}+V_{n}w_n\bigg).
\end{align}

Then, by substituting $w_n=(T/V_n)^{2/3}$, we get
\begin{align}
    &H\bigg(\frac{T}{\sqrt{w_nP_{min}}}+w_n V_{n}\bigg)=HT^{2/3}V_{n}^{1/3}P_{\mathrm{min}}^{-1/2}+HT^{2/3}V_{n}^{1/3}=\tilde O\bigg(T^{2/3}V_{n}^{1/3}\bigg),
\end{align}
where $H$ and $P_{\mathrm{min}}$ are absorbed in the big-O-notation because $H$ is a constant number of time-slots in every episode, $P_{\mathrm{min}}$ depends on the number of time-slots $H$ and the initial state in any episode.

\subsection{Proof of Theorem \ref{theorem4}}\label{Ptheorem4}
We first decompose the upper bound of $\mathrm{Reg(T)}$ into two separate terms. We solve another bandit problem to select $V_n$ from a set of possible drift values based on the history by using EXP3 algorithm \cite{auer2002nonstochastic}. In this case, we can decompose the regret associated with $\mathrm{Reg(T)}$ as follows: 
\begin{align}
\mathrm{Reg(T)}&\leq \sum_{t=1}^T\sum_{n=1}^N U(\pi^*_{n,t}, P_{n, t}, \lambda)-U(\pi_{n,t}(V_n(i_{n,t})), P_{n, t}, \lambda) 
\nonumber\\
&=\sum_{t=1}^T \sum_{n=1}^N\bigg( U(\pi^*_{n,t}, P_{n, t}, \lambda)-U(\pi_{n,t}(V_n^*), P_{n, t}, \lambda) \bigg) \nonumber\\
&~~+ \sum_{t=1}^T \sum_{n=1}^N\bigg( U(\pi_{n,t}(V_n^*), P_{n, t}, \lambda)-U(\pi_{n,t}(V_n(i_{n,t})), P_{n, t}, \lambda) \bigg)
\end{align}
where we denote $V_n^*$ is the optimal choice from the finite set of possible budgets, $V_n(i_{n,t})$ is estimated, and $\pi_n(V_n)$ denotes our policy when $V_n$ is used. 

The optimal choice of $V_n^*$ satisfies:  
\begin{align}
&\max_{(s, a)\in \mathcal S \times A} \sum_{s'\in \mathcal S} \bigg|P_{n, t}(s'|s, a)- P_{n, t-1}(s'|s, a)\bigg|\nonumber\\
&\leq V_n^*/T\nonumber\\ 
&\leq \max_{(s, a)\in \mathcal S \times A} \sum_{s'\in \mathcal S} \bigg|P_{n, t}(s'|s, a)- P_{n, t-1}(s'|s, a)\bigg|+2/J_nT.
\end{align}
Consequently, the optimal upper bound of the total variation budget $V_{n}$ satisfies the following:
$$V_{n}\leq V_{n}^*\leq V_{n}+2T/J_n.$$ 

By substituting $V_n=V_{n}^*+2T/J_n$ in Theorem \ref{theorem2}, we can obtain:
\begin{align}
    &\sum_{t=1}^T \sum_{n=1}^N\bigg( U(\pi^*_{n,t}, P_{n, t}, \lambda)-U(\pi_{n,t}(V_n^*), P_{n, t}, \lambda) \bigg)\nonumber\\
    &\leq \sum_{n=1}^N\tilde O(T^{2/3}( V_n+2T/J_n)^{1/3}).
\end{align}

For the other term, we can directly use the regret upper bound of EXP3 and get: 
\begin{align}
    \sum_{t=1}^T \sum_{n=1}^N\bigg( U(\pi_{n,t}(V_n^*), P_{n, t}, \lambda)-U(\pi_{n,t}(V_n(i_{n,t})), P_{n, t}, \lambda) \bigg)\leq \tilde O(\sqrt{TJ_n}).
\end{align}




\end{document}